
\documentclass{article}
\usepackage{microtype}
\usepackage{graphicx}
\usepackage{booktabs} 

\usepackage{times}
\usepackage{float}
\usepackage{amsmath, amssymb}
\usepackage[utf8x]{inputenc} 
\usepackage{hyperref}
\usepackage{amsthm}
\usepackage{subcaption}

\newtheorem{lemma*}{Lemma}
\newtheorem{theorem*}{Theorem}
\newtheorem{theorem}{Theorem}
\newtheorem{lemma}{Lemma}
\newtheorem{corollary}{Corollary}

\newif\ifboldnumber

\usepackage{amsmath}
\usepackage{xparse}

\ExplSyntaxOn
\NewDocumentCommand{\RN}{m}
 {
  \textup{ \int_to_Roman:n { #1 } }
 }
\ExplSyntaxOff

\usepackage[accepted]{icml2020}

\icmltitlerunning{Adaptive Distributed Stochastic Variance Reduced Gradient for Heterogeneous Distributed Datasets}
\begin{document}

\twocolumn[
\icmltitle{Adaptive Distributed Stochastic Variance Reduced Gradient for Heterogeneous Distributed Datasets}

\icmlsetsymbol{equal}{*}
\begin{icmlauthorlist}
\icmlauthor{ilqar Ramazanli}{equal,to}
\icmlauthor{Han Nguyen}{equal,to}
\icmlauthor{Hai Pham}{equal,ed} 
\icmlauthor{Sashank J. Reddi}{goo}
\icmlauthor{Barnab\'{a}s P\'{o}czos}{ed} \\
\end{icmlauthorlist}

\icmlaffiliation{to}{Department of Mathematical Sciences, Carnegie Mellon University, Pittsburgh, PA}
\icmlaffiliation{goo}{Google Research, New York, NY}
\icmlaffiliation{ed}{School of Computer Science, Carnegie Mellon University, Pittsburgh, PA}


\icmlcorrespondingauthor{ilqar Ramazanli}{iramazan@andrew.cmu.edu}
\icmlcorrespondingauthor{Han Nguyen}{hann1@andrew.cmu.edu}
\icmlcorrespondingauthor{Hai Pham}{htpham@cs.cmu.edu}

\icmlkeywords{Machine Learning, ICML}

\vskip 0.3in
]

\printAffiliationsAndNotice{\icmlEqualContribution} 

\begin{abstract}
We study distributed optimization algorithms for minimizing the average of \emph{heterogeneous} functions distributed across several machines with a focus on communication efficiency. In such settings, naively using the classical stochastic gradient descent (SGD) or its variants (e.g., SVRG) with a uniform sampling of machines typically yields poor performance. It often leads to the dependence of convergence rate on maximum Lipschitz constant of gradients across the devices. In this paper, we propose a novel \emph{adaptive} sampling of machines specially catered to these settings. Our method relies on an adaptive estimate of local Lipschitz constants base on the information of past gradients. We show that the new way improves the dependence of convergence rate from maximum Lipschitz constant to \emph{average} Lipschitz constant across machines, thereby, significantly accelerating the convergence. Our experiments demonstrate that our method indeed speeds up the convergence of the standard SVRG algorithm in heterogeneous environments.
\end{abstract}

\section{Introduction}

In this paper, we study distributed optimization algorithms to solve finite-sum problems of the form: 
\begin{equation}\label{eq1}
\min_{x\in\mathbb{R}^d}F(x):=\frac{1}{M}\sum_{m=1}^{M}F_m(x),
\end{equation}
where $F_m(x)=\frac{1}{n}\sum_{j\in S_m}f_j(x)$, $\cup_{m=1}^MS_m=\{1,\ldots,N\}$, $|S_m|=n$ and all sets are disjoint. Here $\{S_m\}_{m=1}^M$ represents partitions  of a large dataset with $N$ data points such that each dataset $S_m$ only contains $n \ll N$ data points (in fact, it is admissible to assume that $|S_m|$ varies across the workers and the analysis of our method would stay the same). 
This problem arises naturally in machine learning in the form of empirical risk minimization. We are particularly interested in the decentralized distributed learning setting where each $S_m$ is stored locally in a worker. In this setting, each function $F_m$ is a local average of the total average loss function $F$. We aim to minimize the total average loss function with minimal communication amongst the workers.\medskip\\
Traditional distributed machine learning settings assume that each worker $S_m$ has independent and identical distributed (\emph{i.i.d.}) samples from an underlying distribution. 
This implicitly implies that each of the local average loss function $F_m$ is statistically similar to the total average loss function $F$ due to the law of large numbers. 
In contrast, we assume that the data on each worker may be generated from different distributions. 
Consequently, the local average loss functions can be potentially very different from each other and from the total average loss function. 
A typical example of this setting is one where a large dataset is gathered to a server and then distributed unevenly to all workers in the sense that each worker only contains some main features of the whole data. Another canonical example of our setting is that of learning a machine learning model using data from mobile phone users. Here each mobile phone user is a worker and contains data such as photos, texts based on their interest. As a result, the characteristics of data on each mobile phone vary by user. Our setting is a particular case of a more general framework, Federated Learning \cite{konevcny2016federated}, which is a challenging and exciting setting for distributed optimization. \medskip\\
In the settings above, the change of the gradients from some worker's local functions could dominate the change of the gradient of the global function $\ F$. We refer to these workers as informative workers. In particular, the gradients of some workers might change very slowly so that their contribution to the change of the gradient of $F$ is almost negligible. Hereafter, we refer to such workers as non-informative workers. Naively using SGD or its variance reduced variants (eg., SVRG)  with uniform sampling often yields poor performance in such settings because the majority of the computation is spent on non-informative workers. This insight was exploited in the work of \cite{chen2018lag} to prevent computing new gradients of non-informative workers frequently in the deterministic gradient descent (GD). We can think of our work as their stochastic counterpart..\medskip\\ 
Our primary goal in this paper is to design an adaptive sampling strategy for SVRG.
It's a reduced variance variant of SGD that works efficiently in the heterogeneous setting of our interest by paying more attention to informative workers. 
We want to emphasize in an environment that the information held at each worker may be very different. 
Treating them, in the same way, may results in inefficiency due to loss of information.
For instance, using uniform distribution to select workers as in the standard SGD and SVRG slows down since it keeps revisiting non- informative workers.
Formally, since the gradients of non-informative workers are very small comparing to the gradients of informative workers, the optimization will have very small (or almost zero) improvement by following these directions. Thus, it is desirable to design an adaptive optimization method that is able to select useful workers during the training process. By selecting workers actively, we are able to save a number of iterations from reaching a predetermine precision comparing to the uniform based sampling method.\medskip\\
\textbf{Contributions.} In light of the above discussion, we state the  main contributions of this work. 
\begin{itemize}

\item First, we develop an \emph{adaptive} sampling strategy for the  SVRG algorithm and show that it improves the convergence of the SVRG algorithm in the heterogeneous setting. 
Our method is also robust to the homogenous data across machines; meanwhile, few machines have outlier data with much larger Lipschitz constant.
In detail, our adaptive sampling technique pays more attention to informative workers. Consequently, we can reduce the dependency on the maximum of the Lipschitz constants to the average of them in the convergence rate of the SVRG algorithm.
Besides, our experiments show that our adaptive algorithm is more stable with large step sizes than the standard SVRG algorithm. 
\item Second, we design an efficient adaptive local Lipschitz estimation method that is another version of the importance sampling algorithm due to \cite{xiao2014proximal}.
Our method outperforms the result above in the sense that we don't need any pre-information regarding the exact or estimated values of Lipschitz constants.
We provide a robust theoretical analysis of the estimation method and show that the convergence rate of this method is almost the same as the importance sampling strategy. 
\item  Third,  we propose a new parallel communication method with optimal cost.
This method enables sampling with respect to weights in a condition that initially, machines know just their weights.
In detail, we show that our parallel sampling technique can choose $R$ workers by just using $O(M)$ many worker-worker communications for any $R$. 
\end{itemize}

\section{Related Work} 

\paragraph{Single-machine Setting:} Although there were some efficient SGD-based approaches for the single-machine setting \cite{bottou2010large,robbins1951stochastic}, none of them did better than sub-linear convergence rate, leading to SVRG \cite{johnson2018training} and others \cite{le2013stochastic, defazio2014saga,bouchard2015online,zhao2015stochastic} that addressed variance reduction and hence improving the convergence rate. Serving the same purpose, gradient-based approximate sampling methods \cite{alain2015variance,katharopoulos2017biased,katharopoulos2018not} were proposed, but they suffered from high computation cost. To solve this problem, more robust and less computation-consuming methods based on gradient norms \cite{johnson2018training, stich2017safe} were used to reduce the sampling cost while still maintaining variance reduction goal. 

\vspace{-3mm}

\paragraph{Distributed Learning:} Distributing large-scale datasets across multiple servers is an effective solution to reduce per-server storage and memory utilization \cite{dean2008mapreduce,zaharia2010spark,dean2012large}. The first and traditional approach is synchronous parallel minibatch SGD \cite{dekel2012optimal,li2014communication}. 
Although being able to split the workloads to many nodes to speed up jobs, this method suffers from the high latency problem which might happen due to one or some slow nodes, which can be solved by the second group of asynchronous methods \cite{recht2011hogwild, reddi2015variance,duchi2013estimation}. 

\vspace{-3mm}

\paragraph{Communication Efficiency:}
In order to overcome the communication burden in distributed optimization, communication-efficient methods have been proposed \cite{zinkevich2010parallelized, zhang2013information, zhang2012communication, shamir2014communication,reddi2016aide,chen2018lag}. 
The methods by \cite{zinkevich2010parallelized,shamir2014communication,reddi2016aide} reduce the communication rounds by increase the computation on local workers. However, those approaches also assumed \textit{i.i.d} setting, unlike ours. 
On the contrary, the work by \cite{chen2018lag} tackles with the \textit{non-i.i.d} setting. Specifically, they propose an algorithm that can detect slow-varying gradients and skip their calculations when computing the full gradients to reduce the communication cost. 

\section{Preliminaries}

\paragraph{Notations:}Standard inner product and $\ell_2$ norm induced from that are denoted by $\left< ., . \right>$ and $\|.\|$ correspondingly. $\mathbb{E}[.|X]$ and $\mathbb{E}[.]$ stands for conditional and full expectations. Sets \{1,2,\ldots, M\},\{N,N+1,\ldots,M\} and \{N,2N,3N,\ldots, MN\} will be represented by [M], [N,M] and [M]N respectively.

\paragraph{Problem Setup: }
We consider the finite sum optimization problem (\ref{eq1}) in the distributed learning setting where each function $F_m$ is stored on a local worker. 
We assume workers can communicate with each other and also with the server. 
However, each type of communication has its own cost.  
In practice, servers can perform mass broadcasting to multiple workers, but not vice-versa~\cite{chen2018lag}. 
Therefore, we assume that server to worker communication is cheaper than worker to worker. However,  worker to server communication is more expensive.
Therefore, the cost of information flow is dominated by worker to server and worker to worker communications.
For convergence analysis, we assume that each function $F_j$ is convex with $L_j$-Lipschitz gradients. 
In other words, for any $x,y\in\mathbb{R}^d$ and $j\in[M]$, we have the following:
\vspace{-1mm}
\begin{equation*}
F_j(y)- F_j(x)-\left<\nabla F_j(x),y-x\right>\le \frac{L_j}{2}\|y - x\|^2.
\end{equation*}
Moreover, we use $\bar{L}$ to denote the average of Lipschitz constants and $\widetilde{L}$ for a maximum of them. Due to the \emph{non-i.i.d} data distributed setting, we assume that $L_j$'s vary highly across the workers.  We assume each $F_j$ is $\lambda_j$-strongly (and $F$ is $\lambda$-strongly) convex, i.e. for any $ x,y\in\mathbb{R}^d$  we have :
\vspace{-1mm}
\begin{equation*}
    F_j(y)-F_j(x)-\left<\nabla F_j(x),y-x\right>\ge\frac{\lambda_j}{2}\|y-x\|^2.
\vspace{-2mm}
\end{equation*}
Finally, we denote  $K_j = \sup_{i \in S_j} \frac{l_i}{\lambda_j}$ and assuming it's not tremendously big, where $l_i$'s are Lipschitz constants for gradient of atomic functions $f_i$.

\vspace{-3mm}

\paragraph{Motivation:}

The main mechanism in large scale optimization for machine learning is the SGD algorithm. At each iteration, this method picks a function $F_m$ uniformly random then uses the gradient of this chosen function in the gradient descent update instead of the full gradient. Although the computation is saved, the convergence rate of the SGD algorithm  depends strongly on the variance of the stochastic gradients.  \cite{bottou2018optimization} shows that if $\mathbb{E}_m\|\nabla F_m(x)\|^2\le M+M_G\|\nabla F(x)\|^2$ holds for some positive constants $M, M_G$, then following statement is satisfied. \medskip\\
\textit{\textbf{Theorem}:
If we choose the step size $\alpha_t=\frac{\beta}{\sigma +t}$ for some $\beta>\frac{1}{c\lambda}$ and $\sigma>0$ such that $\alpha_1<\frac{\lambda}{LM_G}$, then for all $t\in\mathbb{N}$, the expected optimality gap satisfies}
$$\mathbb{E}[F(x_t)-F(x^*)]\le\frac{\nu}{\sigma+t}$$
\textit{where}
$\nu =\max\left\{\frac{\beta^2L M}{2(\beta c\lambda-1)},(\sigma+1)[F(x_0)-F(x^*)]\right\}$
\textit{and $L$ is the Lipschitz constant of $\nabla F$}. 

The convergence rate above is sublinear, and it also depends on the Lipschitz constant of the gradient, which may be very big in practice. 
Therefore the SVRG algorithm was proposed in \cite{johnson2013accelerating} to overcome these issues.
\begin{algorithm}[ht]
\caption{\textbf{SVRG} with fixed sampling \cite{xiao2014proximal}.}
{\bfseries Input:}  Initial solution $\bar{x}_0$, step size $\eta$,  number of iterations on each epoch $T$ and the number of epochs $K$. Provided fixed distribution $p=[p_1,\ldots,p_M]$ to sample the indices.

\begin{algorithmic}[1]
\FOR{$k=1$ {\bfseries to} $K$} 
\STATE Compute $\nabla F(\bar{x}_{k-1})$
\STATE $x_0=\bar{x}_{k-1}$
\FOR {$t=1,\ldots, T$}
\STATE Randomly pick $m_t\in\ [M]$ w.r.t. distribution $p$ 
\STATE  
 $v_t=\frac{\nabla
F_{m_t}(x_{t-1})}{M p_{m_t}}-\frac{\nabla F_{m_t}(\bar{x}_{k-1})}{M p_{m_t}}+\nabla F(\bar{x}_{k-1})$
\STATE Update $x_{t}=x_{t-1}-\eta v_t$\\
\ENDFOR
\STATE Update $\bar{x}_{k}$ by choosing uniformly random $\{x_{t}\}_{t=0}^{T-1}$
\ENDFOR
\end{algorithmic}
{\bfseries Output: } $\bar{x}_K$
\end{algorithm}

At each inner iteration$-t$ the algorithm chooses a function $F_{m_t}$ according to the distribution $p$ and constructs an unbiased estimation $v_t$ of the gradient $\nabla F(x_{t-1})$. 
Similarly to the SGD algorithm, the convergence rate of this method is then affected by the term $\mathbb{E}\|v_t\|^2$. 
Intuitively, the smaller values of this quantity give a better rate. 
After algebraic manipulations, we conclude the following equation:
\begin{align}
     \label{eq2}
    \mathbb{E}  \|v_t\|^2 = &\sum_{m=1}^M\frac{\|\nabla F_m(x_{t-1})-\nabla F_m(\bar{x}_{k-1})\|^2}{M^2 p^2_m}    \\
     &+\|\nabla F(x_{t-1})\|^2-\|\nabla F(x_{t-1})-\nabla F(\bar{x}_{k-1})\|^2 \hspace{1mm} \nonumber
\end{align}
Notice that the first term above depends on the distribution $p$, which means that the choice of $p$ has some effect on the convergence rate. One standard option in practice of $p$ is the uniform distribution.
In this case, the standard analysis of the SVRG algorithm  \cite{johnson2013accelerating} shows that:
 \begin{align*}
      \mathbb{E}[F(\bar{x}_{k})&-F(x^*)] 
     \le \\
     &\left(\frac{1}{\lambda\eta T(1-2\eta\widetilde{L})}+\frac{2\eta \widetilde{L}}{1-2\eta\widetilde{L}}\right)[F(\bar{x}_0)-F(x^*)].
 \end{align*}
 We notice that his rate depends on the maximum Lipschitz constant $\widetilde{L}$.  
 Although this rate is better than previous methods, being dependent on $\widetilde{L}$ can be inefficient when data being non-iid  distributed.
Especially because in the distributed machine learning setting since it may cause many communications. Given that our goal is to design a communication-efficient algorithm, it is crucial to reduce this dependency. 
 \cite{xiao2014proximal} proposed a solution to this problem if Lipschitz constants of the gradients are previously known by setting fixed distribution above to $p=\left[\frac{L_1}{\sum L_m},\ldots,\frac{L_M}{\sum L_m}\right]$. 
They showed that the convergence rate is as following:
  \begin{align*}
     \mathbb{E}[F(\bar{x}_{k}) &-F(x^*)] \le \\   
       &\left(\frac{1}{\lambda\eta(1-2\eta\bar{L})}+\frac{2\eta \bar{L}}{1-2\eta\bar{L}}\right)[F(\bar{x}_0)-F(x^*)]
 \end{align*}
 which depends on a smaller constant$-$average of Lipschitz constants  $\bar{L}$.

\section{Theoretical Results}

In this section, we discuss the details of the theoretical contributions of this work.
First, we provide intuition behind the estimation of local Lipschitz constants of each machine.
Second, we provide the main algorithm that uses the idea of local Lipschitz values to extend it to the Adaptive SVRG algorithm.
Third, we provide our Novel Sampling Strategy, and at last, we present tools that we used for the proof of the main algorithm.

One crucial question arises on the method due to \cite{xiao2014proximal} is what if we don't have access to any information about the exact or estimated value of Lipschitz constants.
Should we return to uniform sampling, or are there alternative methods to solve this issue.
Given that the maximum of Lipschitz constants can be drastically different from their average return to uniform sampling will give us a prolonged convergence rate.
Estimating Lipschitz constants by querying many points before executing the algorithm can be very slow as the estimation process can take exponential runtime with respect to dimension $d$.\medskip\\
Therefore, we suggest a solution which estimates local Lipschitz constants efficiently and prove that the algorithm is still converging as fast when we sampling happens with these weights.
Going back to equation (\ref{eq2}), we notice that choosing a distribution that minimizes the first summand adaptively at each iteration will improve the performance of the SVRG algorithm. 
To clarify this issue, we analyze the following optimization problem.
 \begin{align*}
     \min_{p\in\Delta_M}\sum_{m=1}^M\frac{\|\nabla F_m(x_{t-1})-\nabla F_m(\bar{x}_{k-1})\|^2}{p^2_m}.
 \end{align*}
 By applying the KKT conditions, the solution of the above 
 problem is 
 \begin{align*} 
     p^{k,t}_m&=\frac{\|\nabla F_m(x_{t-1})-\nabla F_m(\bar{x}_{k-1})\|}{\sum_{m=1}^M\|\nabla F_m(x_{t-1})-\nabla F_m(\bar{x}_{k-1})\|}
 \end{align*}
This probability distribution does not depend on any information but local values of the function, which is easily accessible.
The only requirement here is computing the following rephrase of the  $\nabla F_m(x_{t-1})- \nabla F_m(\bar{x}_{k-1})$ which can be rewritten as the following:
\begin{align} \label{aveeachmachine}
 \frac{1}{|S_m|}\sum_{j\in S_m} \Big( \nabla f_j(x_{t-1}) - \nabla f_{j}(\bar{x}_{k-1})  \Big)
\end{align}
However, naively computing each of these values is an expensive task as it requires to go through each datapoint once.
To overcome this issue, we first propose an efficient estimation method to this expression, then we prove that weights due to estimations also successfully give a fast convergence rate.

In the following lemma (extension of \citet{concentrationwithoutreplacement}), we show that taking a small subsample  $\widetilde{S}_m \subset S_m$ at each machine and computing the average :
\begin{align} \label{esti}
 \frac{1}{|\widetilde{S}_m|}\sum_{j\in \widetilde{S}_m} \Big( \nabla f_j(x_{t-1}) - \nabla f_{j}(\bar{x}_{k-1})  \Big)
\end{align}
of this sample in this machine will successfully estimate the expression in (\ref{aveeachmachine}).
Setting $a_i = \nabla f_i(x_{t-1}) - \nabla f_{i}(\bar{x}_{k-1})$ below lets us to bound $\frac{\|b-a\|}{\|\mu\|}$ with respect to $K_j$ and this helps us to use the lemma \ref{lmm1}.

\begin{lemma} \label{lmm1}
Let $S = \{ a_1, a_2, \ldots , a_N \}$ be a set of vectors that $a_i \in \mathbb{R}^d$,  $a \leq a_i \leq b$ for any $i \in [N]$ and fixed vectors $a,b \in \mathbb{R}^d$.
$\mu$ denotes the average of vectors in S: $\mu = \frac{1}{N} \sum_{i=1}^N a_i$ and
$X_1,X_2,...,X_n$ is the set of size $n$ that uniformly sampled without replacement from $S$. 
Given that $n = \frac{1}{\tau^2} \frac{\|b-a\|^2}{2 \mu^2} \log{\frac{2d}{\delta}}$ then the 
following inequality is satisfied with probability at least $1- \delta$ :
\begin{align*} 
     \left\|\frac{1}{n} \sum_{i=1}^n X_i - \mu \right\|_2 \leq \tau \| \mu \|_2
\end{align*}
\end{lemma}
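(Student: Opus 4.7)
The plan is to reduce the $d$-dimensional without-replacement concentration bound to $d$ scalar ones and then combine them via a union bound. The hypothesis $a \le a_i \le b$ (interpreted coordinatewise) is exactly what is needed to run a one-dimensional Hoeffding-type inequality on each coordinate, and the cost of the union bound is a logarithmic factor in $d$, which matches the $\log(2d/\delta)$ appearing in the stated sample complexity.

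Concretely, I would first fix a coordinate $k \in \{1,\dots,d\}$ and consider the scalar random variables $X_{1,k},\dots,X_{n,k}$ obtained by sampling without replacement from the multiset $\{a_{1,k},\dots,a_{N,k}\}$, whose mean is $\mu_k$. Since $a_k \le a_{i,k} \le b_k$ for every $i$, I can invoke Hoeffding's inequality for sampling without replacement (this is Hoeffding's original observation, refined later by Serfling) to conclude
\begin{equation*}
\mathbb{P}\!\left( \left| \tfrac{1}{n}\sum_{i=1}^{n} X_{i,k} - \mu_k \right| > t_k \right) \le 2 \exp\!\left( -\frac{2 n t_k^2}{(b_k - a_k)^2} \right).
\end{equation*}
Choosing $t_k := (b_k - a_k)\sqrt{\log(2d/\delta)/(2n)}$ makes each of these probabilities at most $\delta/d$.

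Next, I would take a union bound across the $d$ coordinates: with probability at least $1-\delta$, all $d$ coordinate-wise deviations are bounded by the corresponding $t_k$ simultaneously. On this event,
\begin{equation*}
\left\| \tfrac{1}{n}\sum_{i=1}^n X_i - \mu \right\|_2^2 \;=\; \sum_{k=1}^d \left( \tfrac{1}{n}\sum_{i=1}^n X_{i,k} - \mu_k \right)^2 \;\le\; \sum_{k=1}^d t_k^2 \;=\; \frac{\log(2d/\delta)}{2n} \sum_{k=1}^d (b_k - a_k)^2 \;=\; \frac{\|b-a\|_2^2 \log(2d/\delta)}{2n}.
\end{equation*}
Finally I would plug in the prescribed $n = \frac{1}{\tau^2}\cdot \frac{\|b-a\|_2^2}{2\|\mu\|_2^2}\log(2d/\delta)$ and verify that the right-hand side is exactly $\tau^2 \|\mu\|_2^2$, completing the proof after taking square roots.

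The only delicate point is justifying the scalar without-replacement Hoeffding bound in the form used above; everything else is bookkeeping. I would cite the standard result (Hoeffding 1963, equation 6.3, or Serfling 1974) rather than re-derive it. A subtle secondary issue is the interpretation of the hypothesis ``$a \le a_i \le b$'' for vectors; I will read this coordinatewise so that the one-dimensional bound applies per coordinate with range $b_k - a_k$ and $\sum_k (b_k-a_k)^2 = \|b-a\|_2^2$, which is what makes the final $\ell_2$ bound collapse neatly to the stated form.
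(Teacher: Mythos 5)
Your proposal is correct and follows essentially the same route as the paper's proof: a scalar Hoeffding bound for sampling without replacement applied coordinatewise, a union bound over the $d$ coordinates (and two tails), summing the squared coordinate deviations to recover $\|b-a\|_2^2$, and plugging in the prescribed $n$. The only cosmetic difference is that you invoke the two-sided scalar bound directly with per-coordinate failure probability $\delta/d$, whereas the paper derives the two tails separately (applying the one-sided bound to the negated set) and reparametrizes $\epsilon \leftarrow \delta/(2d)$ at the end; both yield the same $\log(2d/\delta)$ factor.
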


As an illustration  let's look at the example that $\tau$ selected as $\tau = 0.05$.
This value of $\tau$  corresponds that each of the weights has an estimation of $\pm 5\%$ error rate.
For instance, estimation changes initial weights as:
\begin{align*}
w=(40, 40, 60, 60)   \implies \widetilde{w}=(39, 41, 58, 61)    
\end{align*}
then categorical probabilities change as:
\begin{align*}
p=(0.2, 0.2,0.3,0.3)   \implies  \widetilde{p}=(0.195, 0.206,0.291,0.306).
\end{align*}
We can easily show the probability of each category roughly cannot change more than $2 \tau$ times.
To show this phenomenon in the example above we rewrite $\widetilde{p}$ as
\begin{align*}
 (0.195, 0.206,0.291,0.306) &= 0.97(0.2,0.2,0.3,0.3)\\
 &+0.03(003,0.4,0,0.57)
\end{align*}
More generally, $ \widetilde{p} = 0.97p +0.03q $ where $q$ is another categorical distribution.
To generalize this decomposition, we prove the following lemma:

\begin{lemma} \label{noisydist}
Let $\mathcal{P}$ be a categorical distribution with weights $(w_1,w_2,...,w_m)$ and
$\widetilde{\mathcal{P}}$ be a perturbation of $\mathcal{P}$ with new weights  $(w_1+\delta_1,w_2+\delta_2,...,w_m+\delta_m)$. 
Lets $\mathcal{Q}$ be a categorical distribution with  weights 
$(\delta_1 - w_1 \mathrm{min}(\frac{\delta_i}{w_i}), 
\delta_2 - w_2 \mathrm{min}(\frac{\delta_i}{w_i}),  \ldots ,
\delta_M - w_m \mathrm{min}(\frac{\delta_i}{w_i}))$ and $\lambda$ defined as
\begin{equation*}
\gamma = 1- \min_{1 \leq i \leq m}  \frac{ \frac{w_i+\delta_i}{ w_1+\delta_1 + w_2+\delta_2 + \ldots + w_m+\delta_m
} }{ \frac{w_i}{ w_1+w_2+ \ldots + w_m }  }
\end{equation*}
Then, we can decompose $\widetilde{\mathcal{P}}$ to the combination of $\mathcal{P}$ and $\mathcal{Q}$ as following:
\begin{align*}
    \Psi = 
    \begin{cases}
        \text{sample with respect to } \mathcal{P} & \text{with probability } 1- \gamma \\
        \text{sample with respect to } \mathcal{Q} & \text{with probability } \gamma
    \end{cases}
\end{align*}
\end{lemma}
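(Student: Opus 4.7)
The plan is to exhibit the decomposition by algebraic identification and verify it coordinate by coordinate. First I would introduce the shorthand $W = \sum_j w_j$, $\widetilde{W} = W + \sum_j \delta_j$, $p_i = w_i/W$, $\widetilde{p}_i = (w_i + \delta_i)/\widetilde{W}$, and $c = \min_j(\delta_j/w_j)$. A direct calculation then gives
\[
\frac{\widetilde{p}_i}{p_i} = \frac{W}{\widetilde{W}}\left(1 + \frac{\delta_i}{w_i}\right),
\]
so minimizing over $i$ shows that the $\gamma$ in the statement can be rewritten as $1 - \gamma = (W/\widetilde{W})(1+c)$, a cleaner form that I would carry through the rest of the argument.

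Next I would check that $\mathcal{Q}$ is a well-defined categorical distribution. Each unnormalized weight $\delta_i - c\,w_i$ is non-negative by the choice of $c$, and their sum equals $\sum_i \delta_i - cW = \widetilde{W} - W(1+c)$. This sum is also non-negative, and strictly positive unless every coordinate attains the minimum ratio simultaneously; in that degenerate case $\widetilde{\mathcal{P}} = \mathcal{P}$, we have $\gamma = 0$, and the second branch of $\Psi$ is never invoked, so the statement holds trivially. I would also note that $\gamma \in [0,1]$: non-negativity follows from the preceding identity, while $\gamma \leq 1$ is immediate since each ratio $\widetilde{p}_i/p_i$ is positive.

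With these validity checks in place, the final step is a one-line substitution. For each $i$,
\[
(1-\gamma)\,p_i + \gamma\,q_i = \frac{W(1+c)}{\widetilde{W}}\cdot\frac{w_i}{W} + \frac{\widetilde{W}-W(1+c)}{\widetilde{W}}\cdot\frac{\delta_i - c w_i}{\widetilde{W}-W(1+c)},
\]
and the two matching factors in the second term cancel, leaving $\bigl((1+c)w_i + \delta_i - c w_i\bigr)/\widetilde{W} = (w_i+\delta_i)/\widetilde{W} = \widetilde{p}_i$. This is exactly the probability that $\Psi$ assigns to coordinate $i$, so $\Psi$ and $\widetilde{\mathcal{P}}$ induce the same categorical law.

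There is no serious mathematical obstacle here; the main care needed is in handling the edge case where $\mathcal{Q}$'s total mass vanishes, and in keeping track of the three different normalizations for $\mathcal{P}$, $\widetilde{\mathcal{P}}$, and $\mathcal{Q}$. The derivation is motivated by the observation that $1-\gamma$ is the largest scalar $\alpha$ for which $\widetilde{p}_i - \alpha p_i \geq 0$ for every $i$; this viewpoint gives a more conceptual route to the same decomposition and explains why the coefficient $\min_j(\delta_j/w_j)$ naturally appears in the weights of $\mathcal{Q}$.
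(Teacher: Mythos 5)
Your proof is correct and follows essentially the same route as the paper: both arguments verify the mixture identity coordinate by coordinate, using the simplification $1-\gamma = \frac{W(1+c)}{\widetilde{W}}$ with $c=\min_i(\delta_i/w_i)$ and the cancellation of $\mathcal{Q}$'s normalizing constant against $\gamma$. Your explicit handling of the degenerate case where $\mathcal{Q}$'s total mass vanishes, and the check that $\gamma\in[0,1]$, are small points of added rigor that the paper's proof omits, but the core computation is identical.
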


\begin{figure*}
\centering
\includegraphics[width=1.0\linewidth]{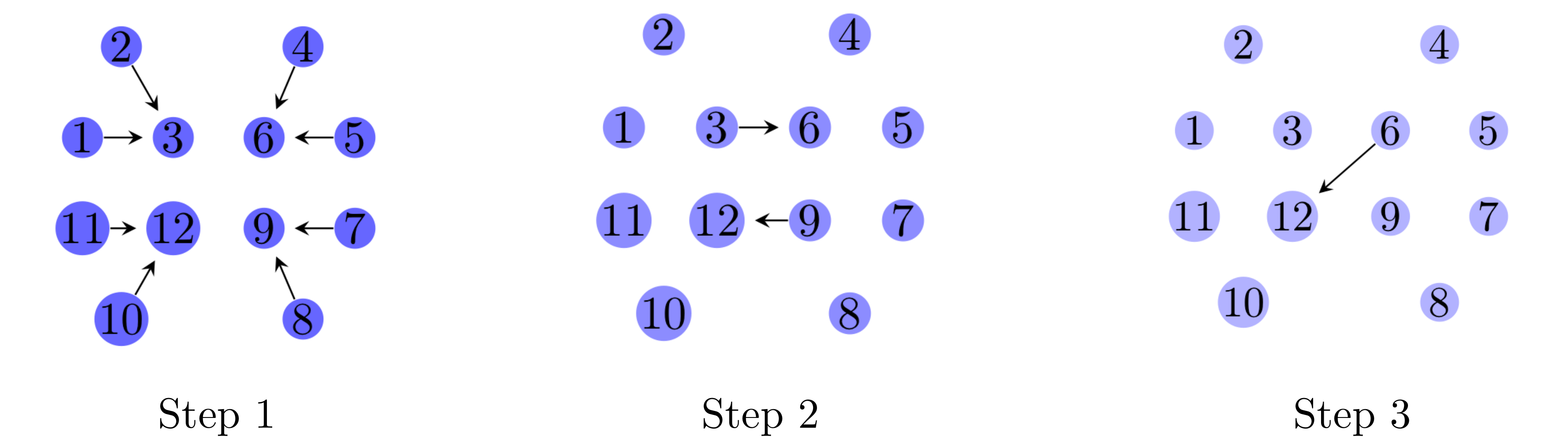}
\caption{Illustration of the Parallel Communication algorithm with the number 12 nodes (M=12) and the parameter $R$ to be 3.
Communication due to line 1 is presented in Step 1 :
Machines $\{1,2\}$ send estimated weight to the machine 3, ( 3 sends to itself, that's why no need to mention),  $\{4,5 \}$ send to 6, $\{7,8 \}$ send to 9, and $\{10,11 \}$ send to 12.
Communication due to the loop of line 3 is given in Step 2 and Step 3. Note that $\log_2{\frac{12}{3}}=\log_2{4} = 2$ and that's why there is two iterations--communication steps here.
First of these communications happen between machine $\{3,6\}$ and $\{9,12\}$ in parallel. 
For the set \{3,6\}, machine 6 is the receiver, and the machine $6-2^0 3=3$ is the sender, and for the set \{6,12\}, the machine 12 is the receiver, and the machine $12-2^0 3=9$ is the sender. 
For the second iteration, we have just machines $\{6,12\}$ participating in communication where $12$ is the receiver; meanwhile, $12-2^1 3 =6$ is the sender.
}
\label{fig:communication}
\end{figure*}

\subsection{Communication Algorithm}

In the previous section, we already discussed how to efficiently estimate weight--the importance of each machine at a given time.
The next important question is how to deliver this information among machines, so we can successfully sample essential machines.
Given that worker to server communication is more expensive than other types of communications, sending all weights to the server directly should be avoided.

Therefore, it's intuitive that we need to use the communication among workers to support the sampling process.
Hence, we target to design a communication method, which optimizes the number of bytes transferred meanwhile having an efficient runtime. 
To provide an intuition for the problem, we present a method of how to sample from the set \{2,3,5,7\} with weights  \{1,1,3,2\} efficiently.

We have four machines, and each of them carries one of the prime numbers above with corresponding weight.
The idea is as simple as the following.
The machine one sends its information (number 2 and weight 1) to machine two, and latter samples among prime numbers {2,3} with respect to weights $1$ and $1$.
 Meanwhile, machine three sends information (number 5 and weight 2) to machine four, and the same process happens there as well.
In the second phase, machine two sends its sampled number to machine four together with cumulative weight from the first phase $(1+1)$. 
Then machine four makes final sampling with weights $(1+1)$ and $(3+2)$ and announces the final result.
The probabiliy of selection of number 7 at the end is equal to $\frac{2}{3+2} \times \frac{3+2}{3+2+1+1} = \frac{2}{7}$ which is the desired probability. 
A simple analysis of this idea, tells us this method runs in $\mathcal{O}(\log{M})$ time using $\mathcal{O}(M)$ bytes transferred.

Note that, after each iteration/update, weights of each machine will change very incrementally.
Therefore recomputing weights and resampling every time is not efficient.
That's why we extend the idea above to enable the sampling of many machines.
A natural extension is sending $R$ many information machines at each step instead of one, which enables sampling of $R$ machines at the end. 
The transfer complexity of this method would  $\mathcal{O}(RM)$ bytes.
However, in the following algorithm, we show how to perform this task using still transfer - $\mathcal{O}(M)$ bytes, no matter how many machines we sample.
We give an illustration for $R=3$ in the figure above and provide analysis in the lemma below.

\begin{algorithm}[H] \label{algo2}
\caption{  \textbf{PC:} \hypertarget{pc}{Parallel} Communication }

{\textbf{Input:}} weights $\{{w}_1, {w}_2,\ldots {w}_M \}$ and group size $R$

\begin{algorithmic}[1]
\STATE  Machine $m \hspace{-.5mm}\in\hspace{-.5mm}[M]$ sends $(m,w_m)$ to machine $\lceil \frac{m}{R} \rceil R$
\STATE Worker $m \in [\frac{M}{R}]R $  samples $R$ indices with replacement from the interval $[m-R+1,m]$ with respect to weights $\{w_{m-R+1},\ldots,w_{m}\}$ and assigns them to $i^m = (i_1^m,i_2^m,...,i_R^m)$ and set
$w_m = \sum_{j=m-R+1}^{m}w_{j}$.

\FOR{$h \in [\log{\frac{M}{R}}] $ }
\STATE  For $m \in [\frac{M}{2^{h}R }]2^{h}R$  denote $s_m = m - 2^{h-1}R$.
 \STATE  Worker $s_m$ sends $(i^{s_m},w_{s_m})$ to  worker $m$.
 \STATE  For any  $j\in[ R]$ worker $m$ samples from $\{ i_j^{s_m}, i_j^{m}\}$ with weights $\{w^{s_m}, w^{m} \} $ and assigns result to $i_j^{m}$ 
\STATE  $w^{m} \leftarrow w^{m}+w^{s_m}$
\ENDFOR
\end{algorithmic}
{\textbf{ Output:}} histogram of $i^M$
\end{algorithm}

\vspace{-2mm}

\normalsize
\begin{lemma}\label{communication}
The Parallel Communication sampling technique above samples $R$ many workers with replacement using just $O(M)$  worker to worker communication for any $R$. 
Furthermore, sampling process ends in total time of $\mathcal{O}(R\log{M})$.
\end{lemma}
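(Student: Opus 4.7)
The plan is to separate the statement into two independent claims and prove each directly: (i) \emph{correctness}, namely that the output is a multiset of $R$ i.i.d. samples drawn with replacement from the categorical distribution on $[M]$ with weights $(w_1,\dots,w_M)$, and (ii) the communication and time bounds. Correctness is the non-trivial part and I would handle it by induction on the loop counter $h$. The inductive invariant I would maintain is: after the $h$-th iteration of the loop, for every currently ``active'' leader $m\in [M/(2^h R)]\cdot 2^h R$, the tuple $i^m=(i_1^m,\dots,i_R^m)$ consists of $R$ independent samples, each drawn from the categorical distribution proportional to $\{w_j : j\in[m-2^h R+1,\,m]\}$, and the scalar $w^m$ stores the block-sum $\sum_{j=m-2^h R+1}^{m} w_j$. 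The base case is Step 2, which by construction samples each $i_j^m$ independently from the block $[m-R+1,m]$ with weights $w_{m-R+1},\dots,w_m$, and sets $w_m$ to the block sum.

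For the inductive step I would invoke the standard \emph{weighted-merge} lemma: if $X$ is drawn from a categorical distribution on $A$ with total weight $W_A$ and, independently, $Y$ from one on $B$ with total weight $W_B$, then the mixture that returns $X$ with probability $W_A/(W_A+W_B)$ and $Y$ otherwise is distributed according to the categorical distribution on $A\cup B$ with the combined weights. By the hypothesis at step $h-1$, worker $s_m=m-2^{h-1}R$ carries samples from the range $[m-2^h R+1,\,m-2^{h-1}R]$ and worker $m$ carries samples from $[m-2^{h-1}R+1,\,m]$, with the correct block sums. Line 6 of the algorithm applies the weighted-merge rule coordinate-wise, so each $i_j^m$ becomes a sample from the union of those disjoint ranges with the combined weights; line 7 updates $w^m$ to the new block sum, restoring the invariant. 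The delicate point here is independence \emph{across the $R$ coordinates}: it holds because the $R$ merges at worker $m$ use fresh Bernoulli draws, and the incoming coordinates from $s_m$ were independent by hypothesis. After $\log_2(M/R)$ iterations the only active leader is worker $M$, whose $R$ coordinates are i.i.d.\ samples from the categorical distribution on $[M]$ with weights $(w_1,\dots,w_M)$; the histogram of $i^M$ therefore encodes $R$ samples with replacement, as required.

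For the complexity bounds I would simply account for bytes and rounds. Step 1 is a single parallel round in which each of the $M$ machines sends one $O(1)$-sized message, so $O(M)$ bytes in one round. Step 2 involves only local computation, $O(R)$ work per leader. In iteration $h$ of the loop there are exactly $M/(2^h R)$ sender-receiver pairs; each message carries the tuple $i^{s_m}$ of $R$ indices plus one scalar weight, i.e.\ $O(R)$ bytes. Thus the per-iteration communication is $\frac{M}{2^h R}\cdot O(R) = O(M/2^h)$, and summing the geometric series $\sum_{h\ge 1} M/2^h = O(M)$ gives a total worker-to-worker communication of $O(M)$ bytes, independent of $R$. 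The iteration is one parallel round, each worker does $O(R)$ local merges, and there are $\log_2(M/R)$ iterations, so the wall-clock time is $O(R\log(M/R)) = O(R\log M)$, including Step 2.

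The main obstacle I expect is the careful bookkeeping for the inductive invariant, in particular verifying independence of the $R$ coordinates after each merge; once that is done, the complexity accounting is a geometric-series calculation and the claim follows directly.
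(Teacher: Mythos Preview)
Your proposal is correct and follows essentially the same approach as the paper: the weighted-merge/induction argument you give is the same telescoping-product computation the paper performs for a single machine, and your geometric-series accounting for communication and time matches the paper's line-by-line tally. One small difference worth noting is that you explicitly track independence across the $R$ coordinates via the invariant, whereas the paper only verifies the marginal selection probability of each coordinate; your framing is therefore slightly more careful on this point.
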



\subsection{Main Algorithm}

In this section, we merge all the ideas discussed in previous sections to build an adaptive Distributed SVRG algorithm.
Our algorithm outperforms previous algorithms under the condition that: 
i) there is no pre-information regarding Lipschitz constants and 
ii) the maximum of the Lipschitz constants is much higher than the average of them.

We use the estimation discussed in expression \ref{esti} and lemma \ref{lmm1} to efficiently approximate the local Lipschitz constant (weight) of each worker in line 8.
Then, using algorithm \hyperlink{pc}{$\mathbf{PC}$} we transfer this information among workers and perform sampling in the next line. 
Finally, using the lemma \ref{noisydist} and lemma \ref{mainthm} we complete the analysis of the proposed algorithm.

\begin{algorithm}[H]
\caption{\textbf{ASD-SVRG:} \hypertarget{asdsvrg}{Adaptive}  Sampling Distributed SVRG }
 {\textbf{Input:}}  Initial solution: $\bar{x}_0$ , step length: $\eta$, outer loop size: $K$ and inner loop size: $T$, sampling size-$R$ 
 
\begin{algorithmic}[1]
\FOR{$k=1$ {\textbf {to}} $K$}
\STATE The server distribute $\bar{x}_{k-1}$ to all workers
\STATE \textbf{In parallel}: Worker $m$ computes $\nabla F_m(\bar{x}_{k-1})$ and 
sends it  to the server (for any $m\in[M]$) \\
\STATE Server computes $\nabla 
F(\bar{x}_{k-1})$ and sends it to  workers\\
\STATE $x_0=\bar{x}_{k-1}$
\FOR{$t=1$ {\textbf {to}} $T$}
\STATE Server sends $x_{t-1}$ to all workers
\STATE \textbf{In parallel}: Uniformly sample $\widetilde{S}_m \subset S_m$ of size $n_m$.
Compute estimation $\widetilde{w}_m$ of $w_m$ using $\widetilde{S}_m$  $\widetilde{w}_m = \| \frac{1}{n_m}\sum_{j\in \widetilde{S}_m} \nabla f_j(x_{t-1}) - \nabla f_{m}(\bar{x}_{k-1})\|$  

\vspace{1mm}

\STATE $\mathrm{H} =\hyperlink{pc}{\mathbf{PC}}((\widetilde{w}_1,\widetilde{w}_2,\ldots,\widetilde{w}_m) , R)$
\STATE \textbf{In parallel}: For any  worker $m \in \mathrm{H}$ compute :\\ 

\vspace{1mm}
  
$v^m_t=\frac{\nabla  F_{m}(x_{t-1})}{M \widetilde{p}^{k,t}_{m}}-\frac{\nabla F_{m}(\bar{x}_{k-1})}{M \widetilde{p}^{k,t}_{m}}+\nabla F(\bar{x}_{k-1})$ \textit{where} $\widetilde{p}^{k,t}_{m}=\frac{\widetilde{w}_m}{\sum_{m=1}^M \widetilde{w}_m}$

\vspace{0.5mm}

\STATE Update $x_t = x_{t-1} - \eta \sum_{m=1}^{M} \frac{\mathrm{H}[m] \times v^m_t}{R}$
\STATE Send $x_t$ back to the server
\ENDFOR
\STATE Update $\bar{x}_{k}$ by choosing uniformly random $\{x_{t}\}_{t=0}^{T-1}$
\ENDFOR 
\end{algorithmic}
{\textbf{ Output:}} $\bar{x}_K$ 
\end{algorithm}

In the following theorem, we are characterizing the convergence rate of the algorithm described above.
Notice that the convergence rate of our method also depends on the average of Lipschitz constants.
Thus, our method is better than uniform sampling SVRG.
Moreover, in comparison with the importance sampling method \cite{xiao2014proximal}, we can see that our algorithm is still at least as good as that method.

The sub-sampling method in line 8 is crucial when each of the machines has tremendous data.
Even though it gives some small error to sampling weights, we show that it does not affect the convergence rate importantly.

\begin{theorem} \label{thm2}
Given $K,T,R>0$, and $\eta$ small. The iteration $\bar{x}_k$ in \hyperlink{asdsvrg}{$\mathbf{ASD-SVRG}$} converges to the optimal solution $x^*$ linearly in expectation. Moreover, under the condition each of the $n_m$ satisfies the condition in lemma \ref{lmm1}, then the following inequality get satisfied:
\begin{align*}
  & \mathbb{E}[F(\bar{x}_{k})-F(x^*)]\le \rho\mathbb{E}[F(\bar{x}_{k-1})-F(x^*)],
\end{align*}
with probability of $1-\delta$ where $\rho$ defined as
\begin{align*}
  \rho=\frac{1}{\lambda T \eta \Big(1-\eta ( 1 +\frac{2+5\tau}{R} )\bar{L}\Big)} 
  +  \frac{\eta\frac{2+5\tau}{R} \bar{L}}{1-\eta (1 + \frac{2+5\tau}{R})\bar{L}}.
\end{align*}
\end{theorem}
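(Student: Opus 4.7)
The plan is to follow the classical SVRG convergence template (as in \cite{johnson2013accelerating,xiao2014proximal}) while carefully tracking three modifications: the adaptive (rather than fixed) sampling distribution at each inner iteration, the estimation noise from the subsampling in line 8, and the averaging over $R$ workers drawn by algorithm \hyperlink{pc}{$\mathbf{PC}$}. The target rate has the same shape as the importance-sampling rate of \cite{xiao2014proximal}, with the leading constant ``$2$'' replaced by $1+(2+5\tau)/R$, so the aim is to produce a variance bound on $\mathbb{E}\|v_t\|^2$ of exactly that shape and then plug it into the standard one-step analysis.

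First, I would verify that each $v_t^m$ is an unbiased estimator of $\nabla F(x_{t-1})$ under $\widetilde{p}^{k,t}$, so that averaging the $R$ independently drawn samples yields $\mathrm{Var}(v_t) = \frac{1}{R}\mathrm{Var}(v_t^{m_1})$; this accounts for the factor $1/R$ in the rate. I would then expand $\mathrm{Var}(v_t^{m_1})$ in the style of \eqref{eq2}, obtaining a weighted sum of $\|\nabla F_m(x_{t-1}) - \nabla F_m(\bar{x}_{k-1})\|^2/(M^2\widetilde{p}_m^{k,t})$ minus the reference term $\|\nabla F(x_{t-1}) - \nabla F(\bar{x}_{k-1})\|^2$.

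Next, I would invoke Lemma \ref{noisydist} to decompose $\widetilde{p}^{k,t} = (1-\gamma)p^{k,t} + \gamma q$, where $p^{k,t}$ is the KKT-optimal distribution derived in Section 4, and use Lemma \ref{lmm1} to bound $\gamma = O(\tau)$ with probability at least $1-\delta$. Substituting the ideal $p^{k,t}$ into the variance expression collapses it, by the Cauchy--Schwarz identity $\sum_m \|g_m\|^2/p_m^{k,t} = (\sum_m\|g_m\|)^2$ evaluated at the KKT point, to $(\sum_m\|\nabla F_m(x_{t-1}) - \nabla F_m(\bar{x}_{k-1})\|)^2/M^2$. A further Cauchy--Schwarz step combined with co-coercivity for each $L_m$-smooth convex $F_m$ bounds this by $2\bar{L}\bigl(F(x_{t-1}) - F(\bar{x}_{k-1}) - \langle\nabla F(\bar{x}_{k-1}), x_{t-1} - \bar{x}_{k-1}\rangle\bigr)$, producing the constant $2$. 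A parallel but cruder bound on the $\gamma q$-contribution, uniform over $q$, yields the additional $5\tau$ slack without reintroducing any $\widetilde{L}$ dependence.

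Finally, I would carry out the familiar one-step contraction
\begin{equation*}
\mathbb{E}\|x_t - x^*\|^2 \le \|x_{t-1} - x^*\|^2 - 2\eta\langle\nabla F(x_{t-1}), x_{t-1} - x^*\rangle + \eta^2\mathbb{E}\|v_t\|^2,
\end{equation*}
insert the variance bound, apply convexity to the inner product, telescope $t=1,\ldots,T$, and then combine with strong convexity and the uniform-average definition of $\bar{x}_k$ (via Jensen's inequality) to recover the stated $\rho$. The main obstacle I anticipate is the bookkeeping in the second step: Lemma \ref{noisydist} must be invoked tightly enough to yield precisely the constant $5\tau$ rather than a larger multiple, and the $q$-component must be controlled by the same co-coercivity quantity used for $p^{k,t}$ so that no $\widetilde{L}$ sneaks back into the final bound.
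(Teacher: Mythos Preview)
Your overall template matches the paper's: verify unbiasedness of $v_t$, reduce the variance by $1/R$ via independence of the $R$ draws from \hyperlink{pc}{$\mathbf{PC}$}, bound the single-draw second moment, then telescope the one-step inequality and close with strong convexity. The divergence is in how you pass from the perturbed $\widetilde p^{k,t}$ to the ideal $p^{k,t}$. You propose to split the variance via the mixture $\widetilde p=(1-\gamma)p+\gamma q$ and bound a ``$p$-part'' and a ``$q$-part'' separately. This step fails as stated: the quantity to control is $\sum_m\|g_m\|^2/(M^2\widetilde p_m)$ with $g_m=\nabla F_m(x_{t-1})-\nabla F_m(\bar x_{k-1})$, and neither decomposing the sampling law $\mathbb E_{\widetilde p}[\cdot]=(1-\gamma)\mathbb E_p[\cdot]+\gamma\mathbb E_q[\cdot]$ nor the convexity bound $1/\widetilde p_m\le(1-\gamma)/p_m+\gamma/q_m$ produces a usable $q$-contribution. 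In the first case $\widetilde p_m$ remains in the denominator; in the second, $\sum_m\|g_m\|^2/q_m$ is unbounded over probability vectors $q$ (some $q_m$ may vanish while $g_m\neq 0$), so no ``cruder bound uniform over $q$'' exists, and any attempt to bound it coordinatewise reintroduces $\widetilde L$.

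The paper uses Lemma~\ref{noisydist} only to extract the pointwise lower bound $\widetilde p_m\ge(1-\gamma)p_m$, i.e.\ $p_m/\widetilde p_m\le 1/(1-\gamma)$. Because the KKT-optimal $p_m$ is proportional to $\|g_m\|$, the ratio $\|g_m\|/(Mp_m)=M^{-1}\sum_j\|g_j\|$ is \emph{constant in $m$}, so
\[
\left\|\frac{g_m}{M\widetilde p_m}\right\|=\frac{p_m}{\widetilde p_m}\cdot\frac{\|g_m\|}{Mp_m}\le\frac{1}{1-\gamma}\cdot\frac{1}{M}\sum_{j=1}^M\|g_j\|
\]
is a deterministic bound and the expectation over $\widetilde p$ becomes trivial. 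The factor $1/(1-\gamma)^2$ then multiplies the constant ``$2$'' from Lemma~\ref{mainthm}, and bounding $\gamma$ by a small multiple of $\tau$ (via Lemma~\ref{lmm1}) delivers the $2+5\tau$. Downstream, the paper also splits $g_m$ at $x^*$ via the triangle inequality before applying co-coercivity, which is what makes the telescoped bound land on $F(x_{t-1})-F(x^*)$ and $F(\bar x_{k-1})-F(x^*)$ rather than the Bregman-type quantity you wrote; you will need that split for the final bookkeeping to close.
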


\begin{proof}[\textit{Proof Sketch:}] 
To prove the convergence rate above, we first find a convergence rate for a simpler algorithm.
We assume that in line 8, instead of estimating by subsampling, we compute exact weight by going all over the dataset.
The following lemma characterizes the convergence rate of this method:

\begin{lemma} \label{mainthm}
Given $K,T,R>0$, and $\eta$ small. Then the algorithm described above converges to the optimal solution linearly in expectation. 
Moveover, the itaration $\bar{x}_k$ approaches to optimal solution $x^*$ as:
\begin{align*}
   \mathbb{E}[F(\bar{x}_{k})-F(x^*)]\le \rho \mathbb{E}[F(\bar{x}_{k-1})-F(x^*)],
\end{align*}
where $\rho$ defined as 
$$\rho=\frac{1}{\lambda\eta T\big(1-\eta\left(1+\frac{2}{R}\right)\bar{L}\big)}+\frac{2\eta\frac{\bar{L}}{R}}{1-\eta\left(1+\frac{2}{R}\right)\bar{L}}.$$
\end{lemma}

As this more straightforward version of the algorithm requires inefficient gradient computation at the inner loop, we instead proposed estimating it in $\mathbf{ASD-SVRG}$.
Moreover, using the help of Lemma \ref{noisydist} we obtain a similar rate with a slight difference.
The resulting convergence rate changes only by having $\frac{2+2\gamma}{R}$  instead of $\frac{2}{R}$ (for all terms in numerator and denominator).
It is easy to notice that for small enough $\tau$'s we have $\gamma > 2.5 \tau$ satisfied in which gives us the final convergence rate.
\end{proof}

The straightforward application of the theorem gives us the following conclusion. \\

\begin{corollary}
Let $\eta=\frac{\theta}{\bar{L}}$ where $0<\theta<\frac{1}{3}$. 
Then, for any  $\epsilon>0$, after $\mathcal{O}\left(\left(\frac{\bar{L}}{\gamma}n+n\right)\log\left(\frac{1}{\epsilon}\right)\right)$ many
iterations we  obtain an $\epsilon$-approximation solution, (i.e
$\, \mathbb{E}[F(\bar{x}_k)-F(x^*)]\le \epsilon$) .
\end{corollary}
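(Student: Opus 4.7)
The plan is to unwind Theorem~\ref{thm2} recursively and then choose the inner-loop length $T$ so that the contraction factor $\rho$ becomes an absolute constant strictly less than $1$; the iteration count then follows by taking logarithms. Chaining Theorem~\ref{thm2} across $k$ outer epochs (with a union bound over the per-epoch high-probability guarantees) yields $\mathbb{E}[F(\bar x_k)-F(x^*)]\le \rho^k\,[F(\bar x_0)-F(x^*)]$, so it is enough to exhibit some explicit $\rho_0<1$ and then pick $k=\mathcal{O}(\log(1/\epsilon)/\log(1/\rho_0))$.

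Substituting $\eta=\theta/\bar L$ into $\rho$ gives
\begin{equation*}
\rho=\frac{\bar L}{\lambda T\,\theta\bigl(1-\theta(1+\tfrac{2+5\tau}{R})\bigr)}+\frac{\theta\tfrac{2+5\tau}{R}}{1-\theta(1+\tfrac{2+5\tau}{R})}.
\end{equation*}
For $\theta<1/3$ and the regime of $R$ and small $\tau$ already used in Theorem~\ref{thm2}, the common denominator $1-\theta(1+\tfrac{2+5\tau}{R})$ is bounded below by a positive absolute constant $c_0$. Hence the second (``variance'') term of $\rho$ is independent of $T$ and can be made at most $1/2$ by the choice of $\theta$.

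To force the first (``bias'') term $\bar L/(\lambda T\theta c_0)$ below $1/2$ as well, I would take $T=\Theta(\bar L/\lambda)$, after which $\rho\le\rho_0<1$ and $k=\mathcal{O}(\log(1/\epsilon))$ outer epochs suffice. Each outer epoch consists of one full-gradient computation (cost proportional to $n$ per worker) plus $T$ inner updates, and each inner update itself requires the subsample gradient estimates prescribed by Lemma~\ref{lmm1}. Composing these two sources of cost with $k$ gives a total gradient complexity of $\mathcal{O}\bigl((\tfrac{\bar L}{\gamma}\,n+n)\log(1/\epsilon)\bigr)$, matching the statement once $\gamma$ is read as the strong-convexity parameter.

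The main obstacle I expect is bookkeeping rather than algebra: (i) the $\tau$-dependent perturbation captured by Lemma~\ref{noisydist} must be absorbed into the constants $c_0$ and $\rho_0$ without pushing the admissible range of $\theta$ to zero, and (ii) the per-inner-step subsample size $n_m$ from Lemma~\ref{lmm1}, combined with a union bound across all $kT$ inner steps, has to collapse into the clean expression in the statement rather than introducing extra $\log(1/\delta)$ or $\log d$ factors. Once these accounting issues are handled, the remaining argument is the standard SVRG-style extraction of $\log(1/\epsilon)$ from a geometric contraction.
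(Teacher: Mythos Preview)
Your proposal is correct and is precisely the standard SVRG-style complexity extraction the paper has in mind; the paper itself offers no proof beyond the sentence ``the straightforward application of the theorem gives us the following conclusion,'' and your argument (substitute $\eta=\theta/\bar L$, pick $T=\Theta(\bar L/\lambda)$ to make $\rho$ a constant below $1$, unroll the geometric contraction for $\mathcal{O}(\log(1/\epsilon))$ epochs, and tally the per-epoch cost) is exactly that application. Your reading of $\gamma$ in the corollary as the strong-convexity parameter is also right---the paper overloads the symbol (cf.\ the end of the proof of Lemma~\ref{mainthm}, where $\lambda$ silently becomes $\gamma$).
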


\begin{table*}[!ht]
\setlength{\tabcolsep}{2pt}
\centering
\begin{tabular}{ccc}
\includegraphics[height=4.1cm]{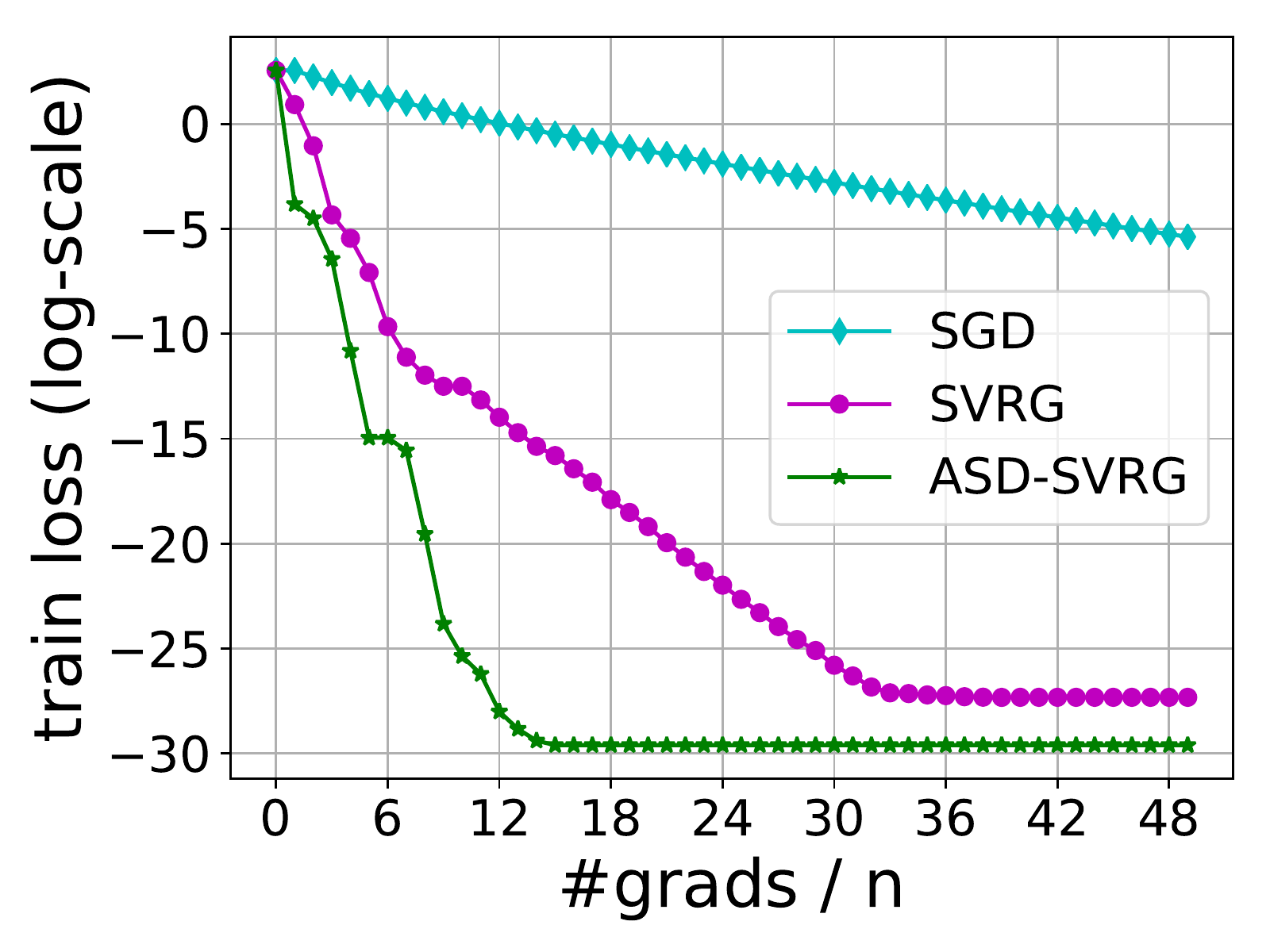} &
\includegraphics[height=4.1cm]{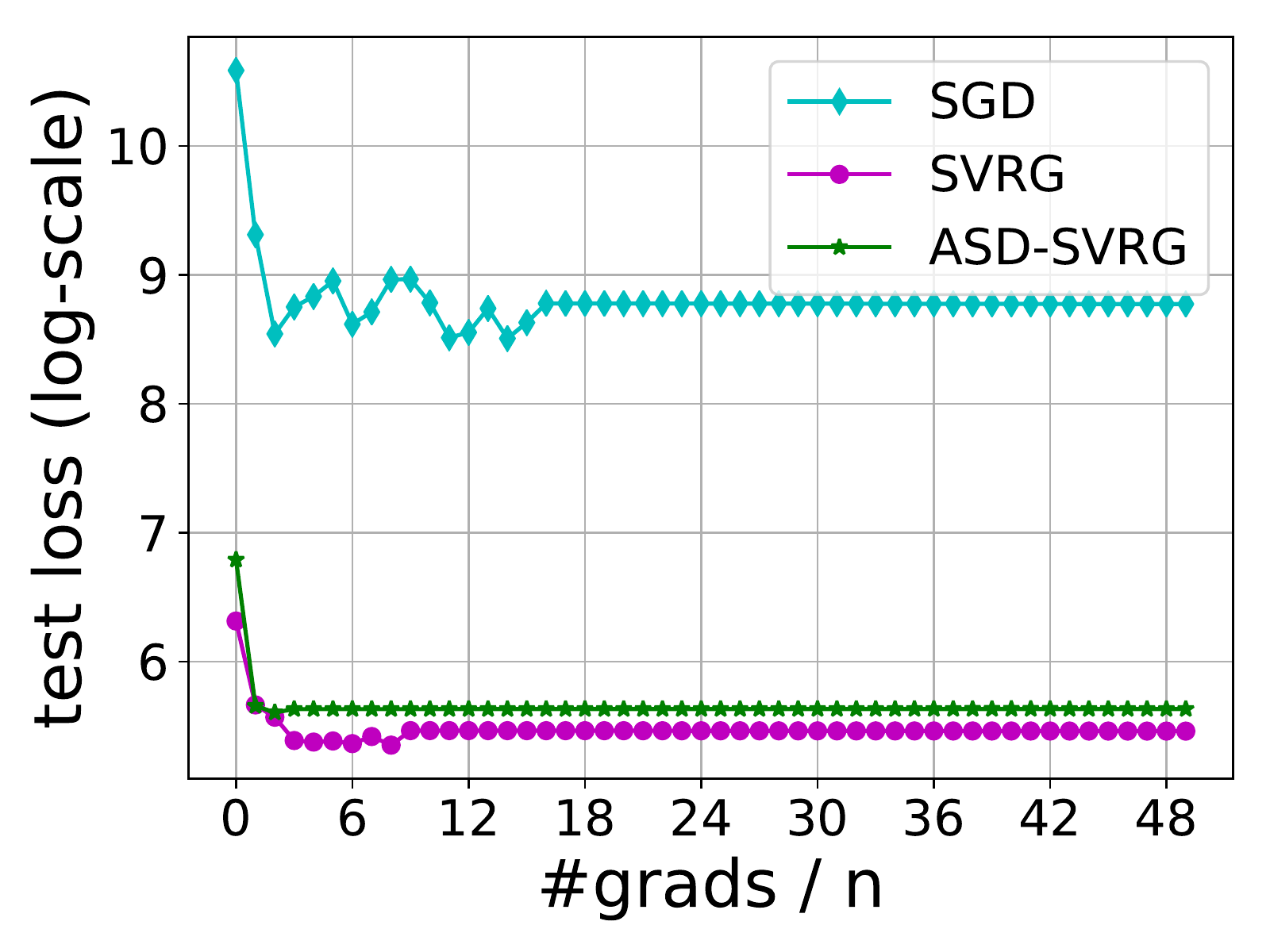} & 
\includegraphics[height=4.1cm]{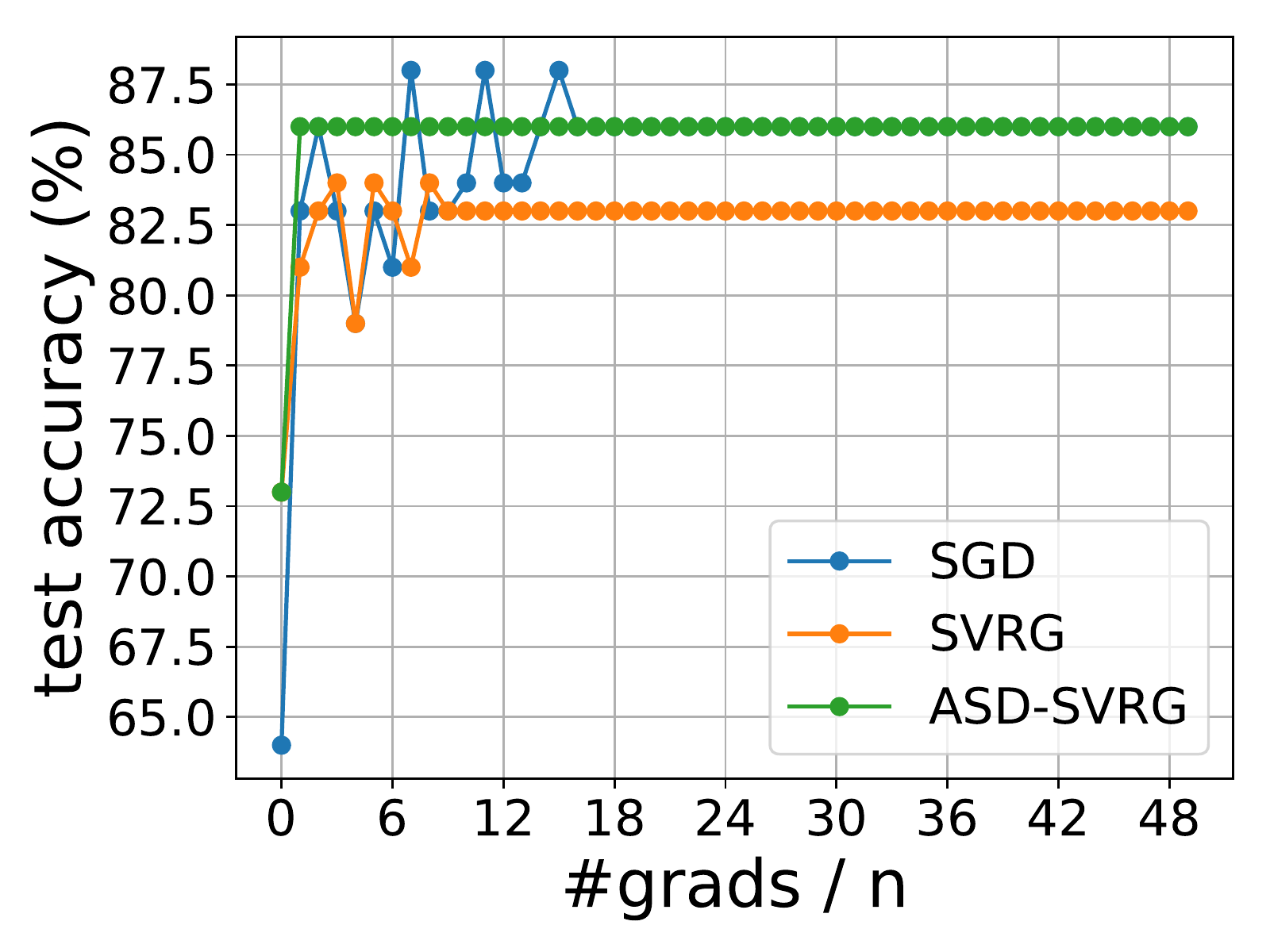} \\
\end{tabular}
\captionof{figure}[]{Best results for each algorithm of SGD, SVRG and ASD-SVRG on our synthetic dataset for logistic regression. ASD-SVRG is able to optimize the train loss much faster than the other two, and although the test loss is a little worse than SVRG, the test accuracy is much higher, proving the advantages of ASD-SVRG. Plus, SVRG is best at learning rate of $7.5e$--$5$ while ASD-SVRG is at $2.5$e--$3$, a lot larger and so is more robust and efficient. 
Left: train loss, center: test loss, right: test accuracy. The losses are in log scale. 
}
\label{fig:best-logistic}
\end{table*}

\begin{table*}[!htb]
\setlength{\tabcolsep}{2pt}
\centering
\begin{tabular}{ccc}
\includegraphics[height=4.1cm]{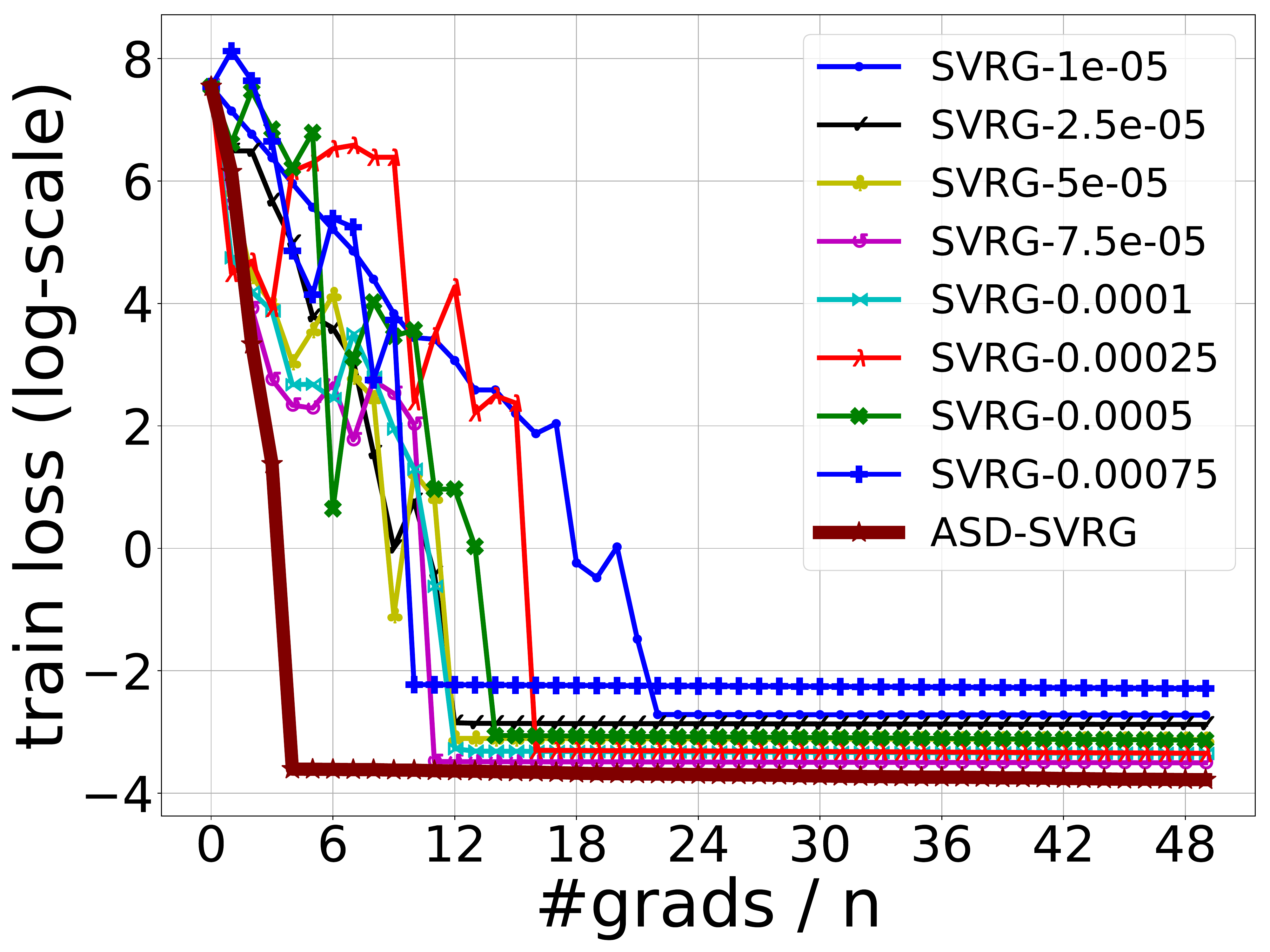} &
\includegraphics[height=4.1cm]{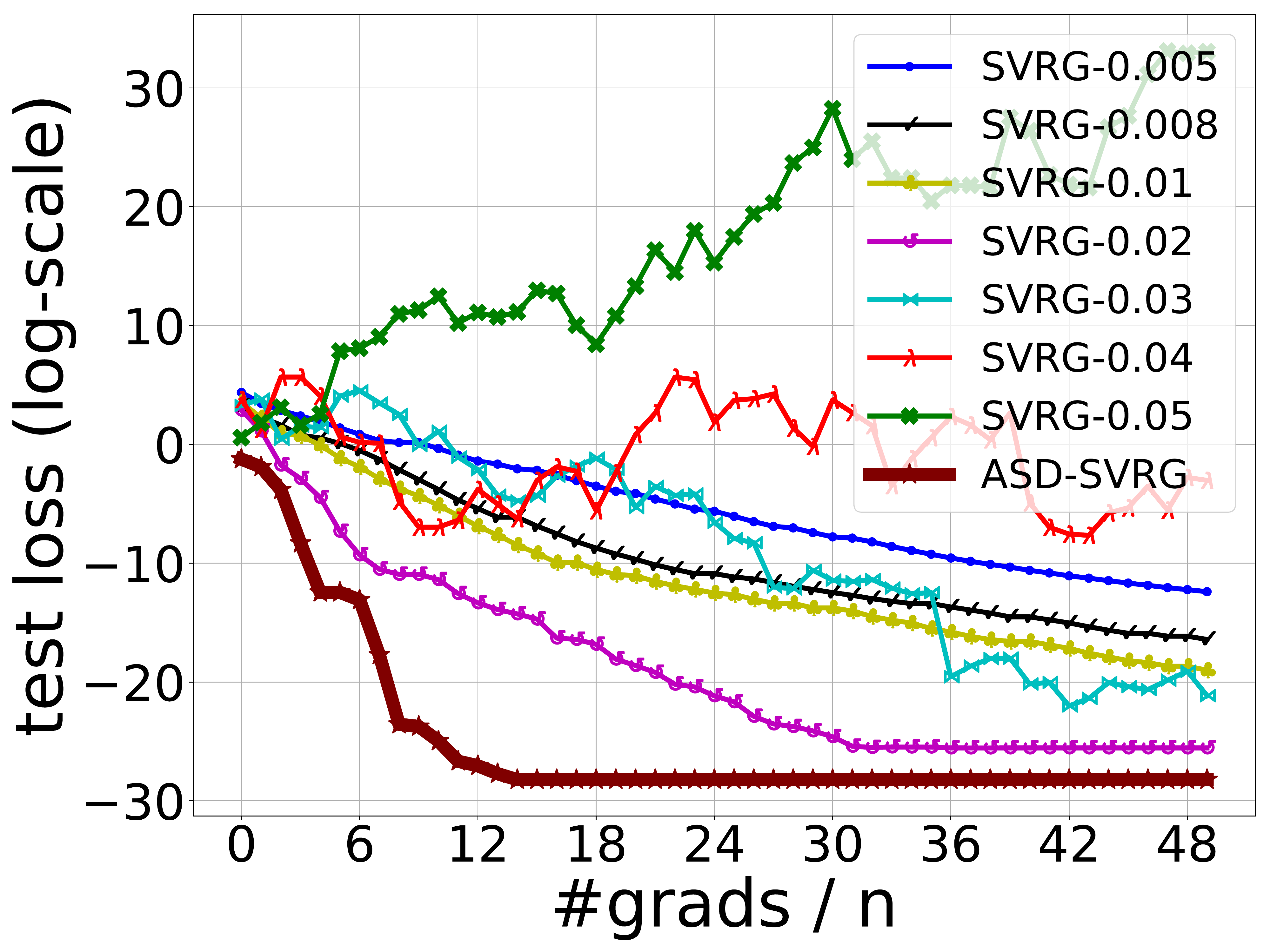} & 
\includegraphics[height=4.1cm]{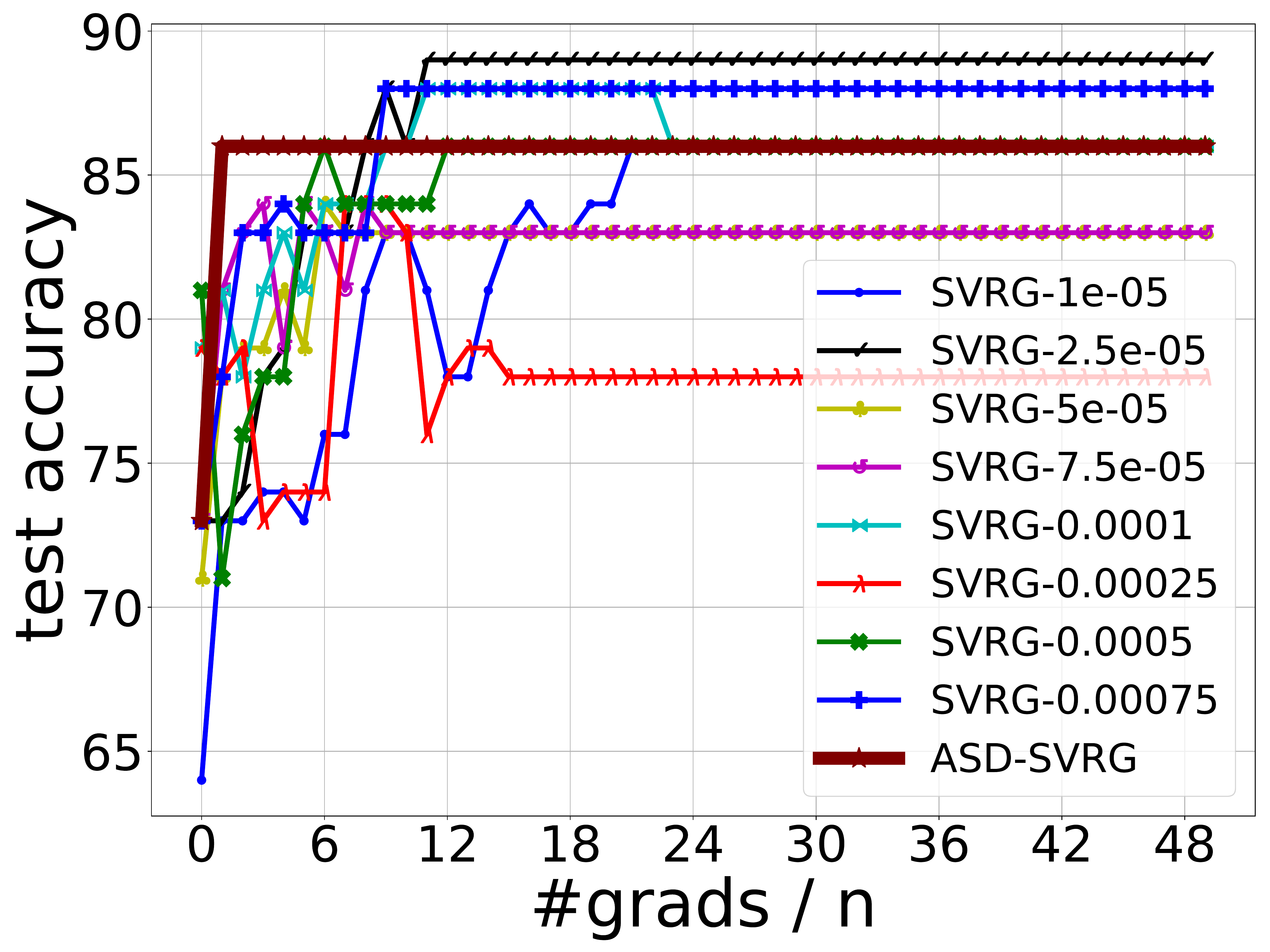}\\
\end{tabular}
\captionof{figure}[]{Ablation study on different learning rates of SVRG compared to ASD-SVRG on the synthetic logistic problem. Except for learning rates, all other setting are the same for both algorithms. 
Train loss (main goal of optimization) and test loss (for generalization) clearly show that ASD-SVRG clearly outperforms other two. For accuracy, although ASD-SVRG is not the best, but those that have higher accuracy are worse in terms of train and test losses. 
Left-to-right: train loss, test loss, test accuracy. The losses are in log scale. }
\label{fig:ablation-logistic}
\end{table*}

\section{Experiments}
To empirically validate our proposed distributed algorithm ASD-SVRG, we compare them to the two baselines which are Distributed SGD and Distributed SVRG, both of which treat all workers uniformly. 

To make the an objective comparison, we initialize the same settings for all of them in that each run has the same number of epochs. For distributed systems perspective, we employ the data parallelization manner, in which the whole data are split into workers. Furthermore, each worker employs the same model architecture, with the same initialization of weights. We describe those experiments in more details in the following sections.

\begin{table}[!ht]
\setlength{\tabcolsep}{0pt}
\centering
\begin{tabular}{cc}
\includegraphics[height=3.1cm]{syn_train_convex_8.pdf} &
\includegraphics[height=3.1cm]{syn_test_convex_8.pdf} \\
\end{tabular}
\captionof{figure}[]{Best results for each algorithm of SGD, SVRG and ASD-SVRG on our synthetic dataset for linear regression. SVRG performs best at learning rate $0.02$ while ASD-SVRG is at $0.2$. Except for SGD The plots show ASD-SVRG is able to bring down the train loss lower and much faster, and hence is more efficient in optimization perspective, while almost has the same performance for test loss.
Left: train loss, right: test loss, both are in log scale.}
\label{fig:best-syn}
\end{table}

\begin{table}[!h]
\setlength{\tabcolsep}{0pt}
\centering
\begin{tabular}{cc}
\includegraphics[height=3.1cm]{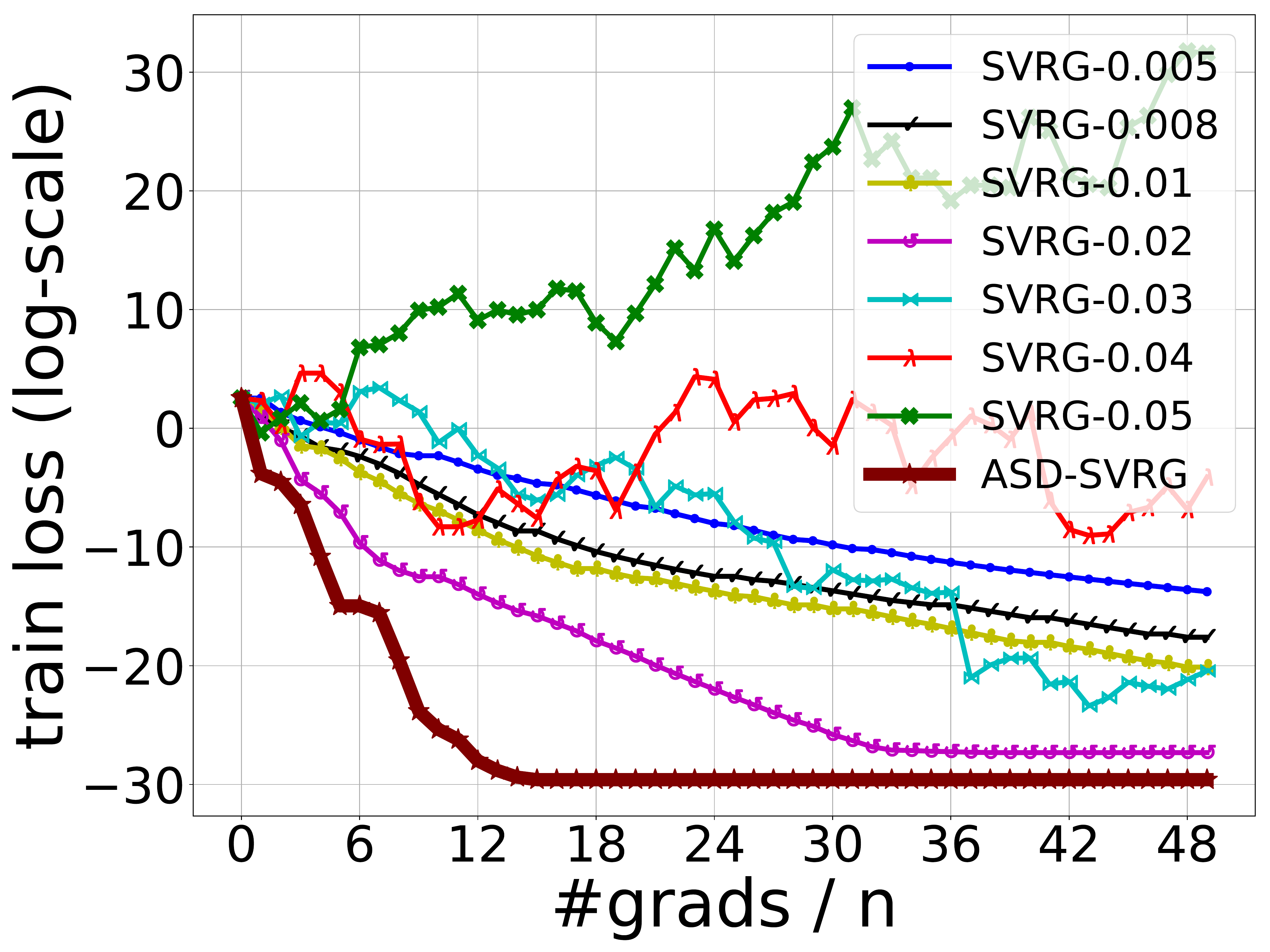} &
\includegraphics[height=3.1cm]{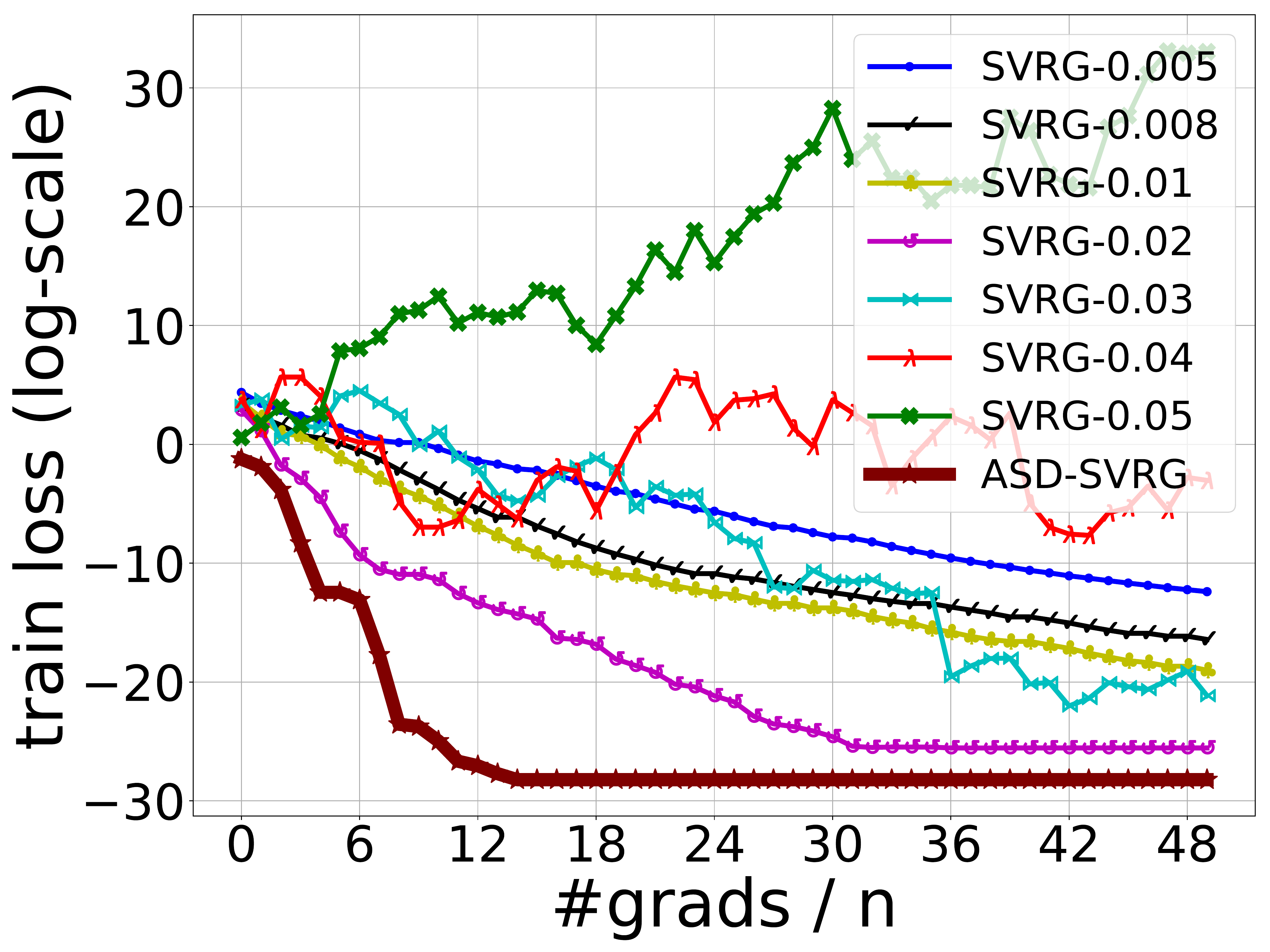} \\
\end{tabular}
\captionof{figure}[]{Ablation study on different learning rates of SVRG compared to our ASD-SVRG on the synthetic regression problem. The results show the advantage of ASD-SVRG on any run of SVRG. ASD-SVRG is less sensitive to the choice of learning rate hyperparameters as compared to SVRG. Left: train loss, right: test loss, both are in log scale.}
\label{fig:ablation-syn}
\end{table}

\subsection{Task and Dataset} 

We design two synthetic datasets for two tasks that has strong convex objective functions: linear regression and logistic regression. For each one, we create increasing Lipschitz constants by worker indices, using an exponential function of those indices. For linear regression, we generate 500 samples of dimension 10 and for logistic regression (in the form of binary classification), we generate 300 samples of dimension 100. 

\paragraph{Experimental Setup} 

To build a convex objective function for both tasks, we build a simple neural network that has only 1 fully-connected layer, which directly transforms the input space to 1 for linear regression, or 2 for logistic regression. This layer is equivalent to a matrix multiplication with an added bias, which is linear in combination. 

For linear regression, we apply mean-square error (MSE) loss. For logistic regression, however, we use log-softmax activation for the logits (for numerical stability), followed by a negative log-likelihood (NLL) loss. In detail, the combination of log-softmax and NLL is equivalent to CrossEntropy loss. 
For optimizers, for both problems, we apply L2 regularization of rate $0.02$ to SGD which is equivalent to weight decaying~\cite{loshchilov2017decoupled}. 
However, for SVRG and ASD-SVRG, we do not apply this regularization mechanism. Finally, to make a fair comparison, we do a grid search of learning rates of each algorithm and compare the best version of each one. Likewise, for each problem, the best performance of each one is yielded by a learning rate different from those of the other two. 

In terms of physical settings, we use a set of 8 paralleled CPUs in a single physical host to run each experiment. And because we implement our code in Pytorch~\cite{paszke2019pytorch}, all the 3 distributed algorithms can be easily adapted to other settings of network architectures (of either convex or non-convex) or distributed configurations such as using parallel GPUs or multiple nodes with many GPUs/CPUs per each. We release our source code at \url{anonymous} for replication of our results.

\paragraph{Results} 
For linear regression, as shown in Figure \ref{fig:best-syn}, ASD-SVRG clearly outperforms others in training and testing: it converges much earlier, and more efficiently, especially in training, which is the main goal in terms of optimization perspective. 
In particular, it also does that with much higher learning rate, which plays an essential role in training speed. 

For logistic regression, Figure \ref{fig:best-logistic} shows a similar behavior in the first two plots. In more detail, our algorithm ASD-SVRG significantly outperforms others for the training set (the main goal), with a learning rate $33$ times larger than SVRG.  We also observe that ASD-SVRG converges faster as compared to SVRG which is consistent with our theoretical analyses. 
Furthermore, ASD-SVRG does not trade generalization for optimization goal. In particular, as shown in the third plot of test accuracy, ASD-SVRG achieves much higher accuracy in prediction ($86\%$ vs $83\%$) while achieving similar test loss compared to SVRG. 

In summary, ASD-SVRG clearly outperforms the baseline methods in both tasks.

\vspace{-1mm}
\paragraph{Ablation Study}
To investigate the difference between ASD-SVRG and its direct counterpart SVRG, we fix the setting of both algorithms and vary the learning rate of SVRG (to the left and right of its best one) to compare its performances with the best setting of ASD-SVRG for both tasks. As shown in Figure \ref{fig:ablation-syn}, ASD-SVRG can easily outperform SVRG in every case for both training and testing for linear regresion. Additionally, if we increase the learning rate towards that of ASD-SVRG (which is many times much larger), SVRG behaves unstably and diverges even at the much lower rates. 

The same observations also happens to logistic regression, as shown in Figure \ref{fig:ablation-logistic}, in both train and test losses. Although some SVRG's rates are better in terms of accuracy (only on a margin of 1\% to 3\%), all of them are worse in terms of both losses. All in all, in terms of optimization perspective, our ablation studies clearly show advantages of ASD-SVRG over SVRG.

\section{Conclusion}
In this paper, 
we have designed and presented a distributed optimization algorithm, namely ASD-SVRG, which assumes no prior knowledge about optimizing functions. Instead, our algorithm is adaptive, in which it samples the most important machines based on data themselves at each step to guide the updates in the optimization process. 
That way, our algorithm is faster converged by redirecting the dependence of convergence rate from maximum to average Lipschitz constants across distributed machines. 
We also provide a statistical categorical distribution decomposition method, which estimates noisy distributions with noiseless versions.
Moreover, we created a novel communication method that effectively minimizes the number of bytes transferred to the parameter server meanwhile having efficient overall run time, both of which are important in practice of distributed algorithms.
For experiments, we implement all algorithms in Pytorch for the ease of adaptation and extension in future, and will also release the code to the public community for results replication. 
We hope that our theoretical results and empirical tools provided in this paper would inspire and help future works in this area.

\bibliography{main}
\bibliographystyle{icml2020}

\appendix

\section{Proof of Lemma \ref{lmm1}}

\begin{lemma*} 
Let $S = \{ a_1, a_2, \ldots , a_N \}$ be a set of vectors that $a_i \in \mathbb{R}^d$,  $a \leq a_i \leq b$ for any $i \in [N]$ and fixed vectors $a,b \in \mathbb{R}^d$.
$\mu$ denotes the average of vectors in S: $\mu = \frac{1}{N} \sum_{i=1}^N a_i$ and
$X_1,X_2,...,X_n$ is the set of size $n$ that uniformly sampled without replacement from $S$. 
Given that $n = \frac{1}{\tau^2} \frac{\|b-a\|^2}{2 \mu^2} \log{\frac{2d}{\epsilon}}$ then the 
following inequality is satisfied with probability at least $1- \epsilon$:
\begin{align*} 
     \left\|\frac{1}{n} \sum_{i=1}^n X_i - \mu \right\|_2 \leq \tau \| \mu \|_2
\end{align*}
\end{lemma*}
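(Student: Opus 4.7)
The plan is to reduce the vector-valued concentration statement to a coordinate-wise scalar Hoeffding-type bound and then glue the coordinates back together via a union bound followed by summing of squared deviations. Let $Y = \frac{1}{n}\sum_{i=1}^n X_i - \mu$ denote the deviation vector and write $Y_k$ for its $k$-th coordinate; similarly write $(a_i)_k$, $a_k$, $b_k$, $\mu_k$ for the $k$-th coordinates of the corresponding vectors. The hypothesis $a \le a_i \le b$ means that for each $k$, the scalar samples $(a_1)_k,\ldots,(a_N)_k$ all lie in the interval $[a_k,b_k]$.

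First I would apply Hoeffding's inequality for sampling without replacement to each coordinate. Hoeffding (1963) proved that the standard Hoeffding bound for bounded i.i.d.\ sums transfers without loss to sampling without replacement from a finite population, so for every $k \in [d]$ and every $t>0$,
\begin{equation*}
\Pr\!\left(|Y_k| \ge t\right) \;\le\; 2\exp\!\left(-\frac{2nt^2}{(b_k-a_k)^2}\right).
\end{equation*}
Setting the right-hand side equal to $\epsilon/d$ and solving for $t$ gives the coordinate-wise deviation
\begin{equation*}
t_k \;=\; \sqrt{\frac{(b_k-a_k)^2}{2n}\,\log\!\frac{2d}{\epsilon}}.
\end{equation*}

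Next I would take a union bound over the $d$ coordinates: with probability at least $1-\epsilon$, the event $|Y_k| \le t_k$ holds simultaneously for all $k\in[d]$. On this event,
\begin{equation*}
\|Y\|_2^2 \;=\; \sum_{k=1}^d Y_k^2 \;\le\; \sum_{k=1}^d \frac{(b_k-a_k)^2}{2n}\log\!\frac{2d}{\epsilon} \;=\; \frac{\|b-a\|_2^2}{2n}\log\!\frac{2d}{\epsilon}.
\end{equation*}
Plugging in the prescribed $n = \frac{1}{\tau^2}\cdot\frac{\|b-a\|^2}{2\|\mu\|^2}\log\!\frac{2d}{\epsilon}$ collapses the right-hand side to $\tau^2\|\mu\|^2$, which yields $\|Y\|_2 \le \tau\|\mu\|_2$ as required.

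The only subtlety, and the one point I would flag carefully, is the use of Hoeffding's inequality for sampling without replacement rather than the more familiar i.i.d.\ version: this is exactly the reason the lemma is phrased as an extension of \citet{concentrationwithoutreplacement}. Everything else is routine bookkeeping—coordinate-wise application, union bound, and summation of squared coordinate errors into the squared $\ell_2$-norm. No additional structural assumptions on the $a_i$'s beyond the bounding box $[a,b]$ are needed, and the dimension $d$ enters only logarithmically, which is why the required sample size depends on $\log(2d/\epsilon)$ rather than on $d$ itself.
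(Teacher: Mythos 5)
Your proof is correct and follows essentially the same route as the paper's: a coordinate-wise Hoeffding bound for sampling without replacement, a union bound over the $d$ coordinates, summation of the squared coordinate deviations into $\|\cdot\|_2^2$, and substitution of the prescribed $n$. The only cosmetic difference is that the paper starts from the one-sided Serfling-type bound (with the factor $\rho_n$, which it immediately discards via $\rho_n\le 1$) and symmetrizes by hand, whereas you invoke the two-sided Hoeffding statement directly; the substance is identical.
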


\begin{proof}

 We use the following concentration inequality to bound the estimation error \cite{concentrationwithoutreplacement}:\medskip\\
\textit{Lemma :
Let $\chi = \{ a_1, a_2, \ldots , a_N \}$ be a set of real points which satisfies 
$\mu = \frac{1}{N} \sum_{i=1}^N a_i$ and $a \leq a_i \leq b$ for  any $i \in [N]$ and real numbers $a$ and $b$.
Lets draw uniform randomly  $X_1,X_2,...,X_n$ without replacement from the set $\chi$.
Then, with probability higher than $1-\epsilon$ the following inequality satisfied:
\begin{align*}
    \frac{1}{n} \sum_{i=1}^n X_i - \mu \leq (b-a) \sqrt{\frac{\rho_n \log{1/\epsilon}}{2n}}
\end{align*}
where we define
\begin{align*}
     \rho_n = 
    \begin{cases}
        1 -\frac{n-1}{N} & \hspace{0.6cm} \text{if }  \hspace{0.3cm} n\leq N/2 \\
        1-\frac{n}{N} & \hspace{0.6cm} \text{if }  \hspace{0.3cm} n > N/2
    \end{cases}    
\end{align*}
}
using the fact $\rho_n \leq 1$ we conclude:
\begin{align*}
    \frac{1}{n} \sum_{i=1}^n X_i - \mu \leq (b-a) \sqrt{\frac{\rho_n \log{1/\epsilon}}{2n}}  \leq (b-a) \sqrt{\frac{ \log{1/\epsilon}}{2n}} 
\end{align*}
and applying the same inequality to the set $\bar{\chi} = \{-a_1,-a_2,\ldots,-a_N\}$ we conclude with probability at least $1-\epsilon$:
\begin{align*}
   - \frac{1}{n} \sum_{i=1}^n X_i + \mu \geq (b-a) \sqrt{\frac{ \log{1/\epsilon}}{2n}}.
\end{align*}
Using union bound gives us with probability at least $1-2\epsilon$ the following inequality satisfied:
\begin{align*}
    | \frac{1}{n} \sum_{i=1}^n X_i - \mu | \leq (b-a) \sqrt{\frac{ \log{1/\epsilon}}{2n}}
\end{align*}
To extend this inequality to vectors, we assume that $a_i =(a_i^1,a_i^2,\ldots,a_i^d)$ and we denote $\mu^j$ by the average of $j$'th coordinates: $\mu^j = \frac{1}{N} \sum_{i=1}^{N} a_i^j$. $a^j$ and $b^j$ will stand for corresponding upper and lower bounds for $j$'th coordinate. Finally, $X_i^j$ is $j$'th coordinate of $i$-th randomly selected element.
Then, for each $j\in [d]$, the following is satisfied with probability $1-2\epsilon$:
\begin{align*}
    | \frac{1}{n} \sum_{i=1}^n X^j_i - \mu^j | \leq (b^j-a^j) \sqrt{\frac{ \log{1/\epsilon}}{2n}}
\end{align*}
Again using union bound we conclude that with a probability of $1-2d\epsilon$ all of the following inequalities are satisfied:
\begin{align*}
    &| \frac{1}{n} \sum_{i=1}^n X^1_i - \mu^1 |^2 \leq (b^1-a^1)^2 {\frac{ \log{1/\epsilon}}{2n}}\\
    &| \frac{1}{n} \sum_{i=1}^n X^2_i - \mu^2 |^2 \leq (b^2-a^2)^2 {\frac{ \log{1/\epsilon}}{2n}}\\
    &\vdots \\
    &|\frac{1}{n} \sum_{i=1}^n X^d_i - \mu^d |^2 \leq (b^d-a^d)^2 {\frac{ \log{1/\epsilon}}{2n}}\\   
\end{align*}
Summing all of the terms in the left and right side, we conclude with probability $1-2d\epsilon$:
\begin{align*}
    &\| \frac{1}{n} \sum_{i=1}^n X_i - \mu \|_2^2 \leq \|b-a\|_2^2 {\frac{ \log{1/\epsilon}}{2n}}
\end{align*}
satisfied. Hence plugging $n = \frac{1}{\tau^2} \frac{ \| b-a \|^2_2 }{2 \mu^2}  \log{1/\epsilon} $ guarantees 
\begin{align*}
    &\| \frac{1}{n} \sum_{i=1}^n X_i - \mu \|_2^2 \leq \tau^2 \| \mu \|^2 
\end{align*}
with probability $1-2d\epsilon$. Therefore, assigning $\epsilon \leftarrow  \frac{\epsilon}{2d}$ and taking square root of each side above implies  having $n = \frac{1}{\tau^2} \frac{ \| b-a \|^2_2 }{2 \mu^2}  \log{2d/\epsilon} $ guarantees the following inequality with probability $1-\epsilon$
\begin{align*}
    &\| \frac{1}{n} \sum_{i=1}^n X_i - \mu \|_2 \leq \tau \| \mu \| 
\end{align*}

\end{proof}

\section{Proof of Lemma \ref{noisydist}}

Similar to the main algorithm, here we denote the $\ell_2$ norm of averages of gradients of machine $m$ by $w_m$. (in the lemma above it corresponds to $\mu$) and its estimation by $\widetilde{w}_m$. Then from the lemma \ref{lmm1} we have $|\widetilde{w}_m -w_m | \leq \tau w_m$ with probability $1-\epsilon$. Hence we can write $\widetilde{w}_m = w_m + \delta$ where $\delta \leq \tau w_m$ with probability $1-\epsilon$.
Lemma \ref{noisydist} gives an interesting property of noisy categorical distributions:

\begin{lemma*} 
Let $\mathcal{P}$ be a categorical distribution with weights $(w_1,w_2,...,w_M)$ and
$\widetilde{\mathcal{P}}$ be perturbed distribution of $\mathcal{P}$ with modified weights as $(w_1+\delta_1,w_2+\delta_2,...,w_M+\delta_M)$. 
Then, if  $\mathcal{Q}$ is another categorical distribution that has  weights 
$(\delta_1 - w_1 \mathrm{min}(\frac{\delta_i}{w_i}), 
\delta_2 - w_2 \mathrm{min}(\frac{\delta_i}{w_i}),  \ldots ,
\delta_M - w_M \mathrm{min}(\frac{\delta_i}{w_i}))$ and 
\begin{equation*}
\gamma = 1- \min_{1 \leq i \leq M}  \frac{ \frac{w_i+\delta_i}{ w_1+\delta_1 + w_2+\delta_2 + \ldots + w_M+\delta_M
} }{ \frac{w_i}{ w_1+w_2+ \ldots + w_M }  }
\end{equation*}
Then, we can decompose $\widetilde{\mathcal{P}}$ to the combination of $\mathcal{P}$ and $\mathcal{Q}$ as following:
\begin{align*}
    \Psi = 
    \begin{cases}
        \text{sample with respect to } \mathcal{P} & \text{with probability } 1- \gamma \\
        \text{sample with respect to } \mathcal{Q} & \text{with probability } \gamma
    \end{cases}
\end{align*}
Moreover,  $\gamma$ is the smallest number that enables decomposing $\widetilde{P}$ to $\mathcal{P}$ and some other categorical distribution. 
\end{lemma*}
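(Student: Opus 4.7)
The plan is to recast the problem as a linear decomposition of probability vectors. Let $W=\sum_j w_j$ and $\widetilde{W}=\sum_j(w_j+\delta_j)$, and write the normalized probabilities as $p_i=w_i/W$ and $\widetilde{p}_i=(w_i+\delta_i)/\widetilde{W}$. I want to find the smallest $\gamma\in[0,1]$ for which there exists a categorical distribution $\mathcal{Q}=(q_1,\ldots,q_M)$ with $q_i\ge 0$ and $\sum_i q_i=1$, satisfying $\widetilde{p}_i=(1-\gamma)p_i+\gamma q_i$ for every $i$.

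Setting $\alpha=1-\gamma$ and solving this linear equation gives $\gamma q_i=\widetilde{p}_i-\alpha p_i$. The sum-to-one condition is automatic, since $\sum_i(\widetilde{p}_i-\alpha p_i)=1-\alpha=\gamma$. The nonnegativity condition $q_i\ge 0$ amounts to $\alpha\le \widetilde{p}_i/p_i$ for every $i$, so the largest admissible $\alpha$ (equivalently, the smallest admissible $\gamma$) is
$$\alpha^\star=\min_{i}\frac{\widetilde{p}_i}{p_i},\qquad \gamma^\star=1-\alpha^\star,$$
which matches the formula in the statement. This simultaneously yields the existence of a valid $\mathcal{Q}$ and the minimality claim: any decomposition with $\gamma'<\gamma^\star$ forces some $q_i<0$.

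It remains to verify that the unnormalized weights $\delta_i-w_i\min_j(\delta_j/w_j)$ in the statement, once normalized, reproduce these $q_i$. Writing $\beta=\min_j\delta_j/w_j$, the identity $\widetilde{p}_j/p_j=(W/\widetilde{W})(1+\delta_j/w_j)$ shows that the index attaining $\min_j\widetilde{p}_j/p_j$ is the same index attaining $\beta$, so $\alpha^\star=(W/\widetilde{W})(1+\beta)$. Then
$$\widetilde{p}_i-\alpha^\star p_i=\frac{\delta_i-\beta w_i}{\widetilde{W}},\qquad \gamma^\star=\frac{\sum_j \delta_j-\beta W}{\widetilde{W}},$$
and dividing gives $q_i=(\delta_i-\beta w_i)/(\sum_j\delta_j-\beta W)$, which coincides with the normalized form of the stated weights for $\mathcal{Q}$.

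There is no substantive obstacle: the lemma reduces to solving a single linear constraint under a positivity requirement, which admits a closed-form minimal solution. The only care needed is in the final algebraic identification of $\alpha^\star$ with the expression involving $\beta$, which is immediate once $W/\widetilde{W}$ is factored out as a $j$-independent constant.
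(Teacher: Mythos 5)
Your proof is correct, and it reaches the paper's conclusion by a cleaner logical route. The paper's own proof is a direct verification: it takes $\gamma$ and the weights of $\mathcal{Q}$ as given in the statement, checks that $\mathcal{Q}$ is well defined (each weight $\delta_j - w_j\min_i(\delta_i/w_i)$ is nonnegative), and then grinds through the algebra to confirm $(1-\gamma)\,p_j + \gamma\, q_j = (w_j+\delta_j)/\sum_i(w_i+\delta_i)$ for every $j$; the minimality of $\gamma$ is dispatched in one sentence by observing that the weight of $\mathcal{Q}$ vanishes at the index attaining $\min_i(\delta_i/w_i)$. You instead treat the decomposition as a linear system $\widetilde{p}_i = (1-\gamma)p_i + \gamma q_i$, observe that normalization is automatic, and let the nonnegativity constraint $q_i\ge 0$ force $\alpha^\star=\min_i \widetilde{p}_i/p_i$, which simultaneously produces the formula for $\gamma$, the existence of $\mathcal{Q}$, and a genuine proof of minimality (any smaller $\gamma$ makes some $q_i$ negative) --- the last point being where your argument is strictly more rigorous than the paper's. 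Your closing identification $\widetilde{p}_i - \alpha^\star p_i = (\delta_i-\beta w_i)/\widetilde{W}$ with $\beta=\min_j \delta_j/w_j$ is exactly the algebraic core of the paper's verification, so the two proofs share the same computation but package it differently: the paper buys concreteness at the cost of an under-justified minimality claim, while your derivation makes the optimality structurally transparent. The only (shared, degenerate) caveat is the case $\gamma=0$, where $\mathcal{Q}$ has all-zero weights and is vacuous; neither proof needs it since then $\widetilde{\mathcal{P}}=\mathcal{P}$.
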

\begin{proof}
It is straightforward to notice that $\mathcal{Q}$ is well-defined as:
\begin{align*}
    \delta_j - w_j min(\frac{\delta_i}{w_i}) \geq \delta_j - w_j \frac{\delta_j}{w_j}=0.
\end{align*}
Considering the fact that for $i_0=\mathrm{argmin}_{1\leq i \leq M}( \frac{\delta_i}{w_i} )$ the inequality  
$\delta_i - w_i min(\frac{\delta_i}{w_i})\geq 0$ is tight, 
then $\gamma$ is the smallest number that $\widetilde{P}$ can be decomposed into $\mathcal{P}$ and some other distribution.

Then all we need to do is to show that the probability of selection of category$-j$ of proposed method is equal to probability of category$-j$ for $\widetilde{\mathcal{P}}$ which is $\frac{w_j+\delta_j}{w_1+\delta_1+w_2+\delta_2+\ldots+w_M+\delta_M}$.
Lets find the probability- $\mathbb{P}_{\Psi}(j)$  of category$-j$ for distribution $\Psi$:
\begin{align} \label{noisysum}
&\mathbb{P}_{\Psi}(j)   =  (1-\gamma) \frac{w_j}{w_1+w_2+\ldots+w_M}  \\
&\hspace{5mm}+\gamma \frac{\delta_j -w_j min(\frac{\delta_i}{w_i}) }{
\delta_1 -w_1 min(\frac{\delta_i}{w_i}) + \ldots + \delta_M -w_M min(\frac{\delta_i}{w_i})} \nonumber
\end{align}
Let's do a detailed analysis of each of these summands. We start with the left summand first.
\begin{align*}
(1-\gamma)  \frac{w_j}{w_1+w_2+\ldots+w_M} \hspace{22mm}  \\
 = \frac{w_j}{w_1+\ldots+w_M}
\min_{1 \leq i \leq M}  \frac{ \frac{w_i+\delta_i}{ w_1+\delta_1 + w_2+\delta_2 + \ldots + w_M+\delta_M
} }{ \frac{w_i}{ w_1+w_2+ \ldots + w_M }  } \hspace{3mm} \\
= \frac{w_j}{w_1+\delta_1+\ldots+w_M+\delta_M}
\min_{1 \leq i \leq M} \frac{w_i+\delta_i}{w_i}  \hspace{13mm}\\
= \frac{w_j}{w_1+\delta_1+\ldots+w_M+\delta_M} (1+\min_{1 \leq i \leq M} \frac{\delta_i}{w_i}) \hspace{12mm}
\end{align*}
Now, we focus on understanding the right summand better. First we focus on the value of $\gamma$:

\begin{align*}
  \gamma = 1- \min_{1 \leq i \leq M}  \frac{ \frac{w_i+\delta_i}{ w_1+\delta_1 + w_2+\delta_2 + \ldots + w_M+\delta_M} }{ \frac{w_i}{ w_1+w_2+ \ldots + w_M }  } \hspace{20mm} \\[1.2ex]
= 
1- \frac{w_1+...+w_M}{w_1+\delta_1+\ldots+w_M+\delta_M} \min_{1\leq i \leq M} \frac{w_i+\delta_i}{w_i} \hspace{7mm}      \medskip\\[1.2ex]
= \frac{w_1+\delta_1 + \ldots + w_M +\delta_M}{w_1+\delta_1 + \ldots+w_M+\delta_M} \hspace{35mm} \medskip\\[1.2ex]
- \frac{ \big( w_1+...+w_M \big)  \min_{1\leq i \leq M} \frac{w_i+\delta_i}{w_i} }{w_1+\delta_1+\ldots+w_M+\delta_M}  \hspace{21mm}
\medskip\\[1.2ex]
=\frac{\delta_1+\ldots+\delta_M - \min_{1 \leq i \leq M} \frac{\delta_i}{w_i}(w_1+\ldots + w_M) }
{w_1+\delta_1+\ldots + w_M+\delta_M} \hspace{4mm}
\end{align*}
Therefore, the right summand in (\ref{noisysum}) above will simply be equal to:
\begin{align*}
\gamma \times  \frac{\delta_j -w_j min(\frac{\delta_i}{w_i}) }{
\delta_1 -w_1 min(\frac{\delta_i}{w_i}) + \ldots + \delta_M -w_M min(\frac{\delta_i}{w_i})}  \hspace{5mm} \\[1.2ex]
=\frac{\delta_1+\ldots+\delta_M - \min_{1 \leq i \leq M} \frac{\delta_i}{w_i}(w_1+\ldots + w_M) } {w_1+\delta_1+\ldots + w_M+\delta_M}  \\[1.2ex]
\times \frac{\delta_j -w_j min(\frac{\delta_i}{w_i}) }{
\delta_1+\ldots+\delta_M - \min_{1 \leq i \leq M} \frac{\delta_i}{w_i}(w_1+\ldots + w_M)} \\[1.2ex]
= \frac{\delta_j-w_j\min(\frac{\delta_i}{w_i})}{w_1+\delta_1+\ldots+w_M +\delta_M} \hspace{30mm}.
\end{align*}
Finally, putting these summands together concludes:
\begin{align*}
\frac{w_j (1+\min_{1 \leq i \leq M}\frac{\delta_i}{w_i})}{w_1+\delta_1+\ldots+w_M+\delta_M} +
\frac{\delta_j-w_j\min(\frac{\delta_i}{w_i})}{w_1+\delta_1+\ldots+w_M +\delta_M}= \\
\frac{w_j + \delta_j }{w_1+\delta_1+\ldots+w_M +\delta_M} \hspace{25mm} .
\end{align*}
Which approves that sampling with respect to $\Psi$ is equivalent to sampling with respect to $\widetilde{\mathcal{P}}$.
\end{proof}
Lemma 2 gives an interesting property of $\gamma$, which is an equivalent way telling that when noise is small (i.e. $\gamma$ is close to zero) noisy distribution is behaving almost the same as noiseless which is intuitive.

After algebraic manipulations we notice that 
\begin{align*}
 \frac{1}{1-\gamma}  =  \max_{1 \leq i \leq M}  \frac{
 \frac{w_i}{ w_1+w_2+ \ldots + w_M }  }
 { \frac{w_i+\delta_i}{ w_1+\delta_1 + w_2+\delta_2 + \ldots + w_M+\delta_M} } =
 \max_{1 \leq i \leq M}  \frac{p_i }{\widetilde{p}_i } 
\end{align*}
where $p_i$ stands for the probability of $i$-th   category.
In the following lemma we discuss under bounded heavy noise we can still bound $\gamma$ (i.e. $\frac{1}{1-\gamma}$) which later will be used to show we still have robust convergence rate.
bounding $\tau$ in the \ref{lmm1} gives a bound  $\max_{1 \leq i \leq M}  \frac{p_i }{\widetilde{p}_i } $ which later this bound will be used to prove final convergence rate.

\begin{lemma} \label{oneminus}
 Selecting $\tau = \frac{1}{3}$ above for all machines, will guarantee that 
 $\frac{1 }{1-\gamma} \leq 2$.
\end{lemma}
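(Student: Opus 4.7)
\textbf{Proof proposal for Lemma \ref{oneminus}.}

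My plan is to use the explicit formula for $\frac{1}{1-\gamma}$ established just before the lemma, namely
\[
\frac{1}{1-\gamma} \;=\; \max_{1\le i \le M} \frac{p_i}{\widetilde{p}_i} \;=\; \max_{1\le i \le M} \frac{w_i}{w_i+\delta_i}\cdot\frac{\sum_j(w_j+\delta_j)}{\sum_j w_j},
\]
and bound each of the two factors separately using the size of the perturbations $\delta_i$ coming from Lemma \ref{lmm1}. The key input is that under $\tau = 1/3$, Lemma \ref{lmm1} guarantees $\|\widetilde{\mu}_m-\mu_m\|\le \tau\|\mu_m\|$ for every machine $m$, and since $w_m=\|\mu_m\|$ and $\widetilde{w}_m=\|\widetilde{\mu}_m\|$, the reverse triangle inequality gives $|\delta_m|=|\widetilde{w}_m-w_m|\le \tau w_m = w_m/3$. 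So on a high-probability event we have $\tfrac{2}{3}w_m \le w_m+\delta_m \le \tfrac{4}{3}w_m$ simultaneously for all $m$.

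With these two-sided bounds in hand, the two factors in the displayed expression are easy to control: the first factor satisfies $\tfrac{w_i}{w_i+\delta_i}\le \tfrac{w_i}{(2/3)w_i}=\tfrac{3}{2}$, and summing the upper bound $w_j+\delta_j\le\tfrac{4}{3}w_j$ over $j$ shows the second factor satisfies $\tfrac{\sum_j(w_j+\delta_j)}{\sum_j w_j}\le\tfrac{4}{3}$. Multiplying, $\tfrac{p_i}{\widetilde{p}_i}\le\tfrac{3}{2}\cdot\tfrac{4}{3}=2$, and taking the maximum over $i$ yields $\tfrac{1}{1-\gamma}\le 2$, which is the claim.

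The proof is essentially routine once one notices that both the numerator term $w_i/(w_i+\delta_i)$ and the normalization-shift term $\sum_j(w_j+\delta_j)/\sum_j w_j$ must be handled together; the mild subtlety is that bounding only one of them (say, just $|\delta_i|/w_i$) would be insufficient because the renormalization of $\widetilde{\mathcal{P}}$ can itself distort probabilities. There is no real obstacle: the only care needed is to verify that the scalar bound $|\delta_m|\le \tau w_m$ is a consequence of the vector bound in Lemma \ref{lmm1} via the reverse triangle inequality, and then to apply the bounds uniformly in $m$ (which is valid because Lemma \ref{lmm1} is applied independently at each machine, so a union bound over $M$ machines only affects the failure probability, not the deterministic constant $2$).
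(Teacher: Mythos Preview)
Your proposal is correct and follows essentially the same route as the paper: from $|\delta_m|\le \tau w_m$ with $\tau=1/3$ you derive $\tfrac{2}{3}w_m\le w_m+\delta_m\le\tfrac{4}{3}w_m$, bound the two factors in $\tfrac{1}{1-\gamma}=\max_i \tfrac{w_i}{w_i+\delta_i}\cdot\tfrac{\sum_j(w_j+\delta_j)}{\sum_j w_j}$ by $\tfrac{3}{2}$ and $\tfrac{4}{3}$ respectively, and multiply. The only addition in your write-up is the explicit mention of the reverse triangle inequality and the union bound over machines, which the paper leaves implicit.
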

\begin{proof}
Having $\tau =\frac{1}{3}$  implies that $ | \delta_m | \leq \frac{1}{3} w_m$ for any $m \in [M]$.
Hence,  $w_m +\delta_m \geq \frac{2}{3} w_m$ get satisfied for all $m\in [M]$. 
Using the same inequality $|\delta_m | \leq \frac{1}{3} w_m$ we conclude also that $w_m+\delta_m \leq \frac{4}{3} w_m $ for any $m\in [M]$.
Then,  
$ \sum\limits_{m=1}^{M} (w_m+\delta_m) \leq \frac{4}{3} \sum\limits_{m=1}^M w_m $  get satisfied, in which equivalent to $$\frac{w_1+w_2+\ldots+w_M}{w_1+\delta_1+\ldots w_M+\delta_M}\geq \frac{3}{4}.$$
Returning back to the definition of $\gamma$:
\begin{align*}
\frac{1}{ 1-\gamma }=  \max_{1 \leq i \leq M}  \frac{ 
\frac{w_i}{ w_1+w_2+ \ldots + w_M }  }
{  \frac{w_i+\delta_i}{ w_1+\delta_1 + w_2+\delta_2 + \ldots + w_M+\delta_M}  } 
\hspace{20mm} \\[1.2ex]
= \max_{1 \leq i \leq m}   \frac{w_i}{w_i+\delta_i} \times 
\frac{w_1+\delta_1+\ldots+w_M+\delta_M}{w_1+w_2+\ldots+w_M} \hspace{2mm} \\[1.2ex]
\leq \frac{3}{2} \times \frac{4}{3}  = 2  \hspace{52mm}
\end{align*}
\end{proof}

\section{Proof of Lemma \ref{communication}}

\begin{lemma*}\label{lm1}
The sampling technique \hyperlink{pc}{$\mathbf{PC}$} samples $R$ many workers with replacement using just $\mathcal{O}(M)$ many worker to worker communication for any $R$. 
Furthermore, sampling process ends in total time of $\mathcal{O}(R\log{M})$.
\end{lemma*}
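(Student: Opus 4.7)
The statement has three parts to verify: (i) the total worker-to-worker communication is $O(M)$ bytes, (ii) the parallel runtime is $O(R\log M)$, and (iii) the returned histogram is $R$ i.i.d.\ samples from the categorical distribution on $[M]$ with weights $(w_1,\dots,w_M)$. I would handle the three parts independently.

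For the communication bound I would split the cost into line~1 and the loop. Line~1 sends one constant-size pair per non-leader machine, for a total of $M-M/R=O(M)$ bytes. At iteration $h$ of the loop the set $[M/(2^h R)]\cdot 2^h R$ contains $M/(2^h R)$ active receivers, and each receives a tuple consisting of $R$ indices plus one cumulative weight, i.e.\ $O(R)$ bytes. Summing the geometric series
\[
  \sum_{h=1}^{\log(M/R)}\frac{M}{2^h R}\cdot O(R)\;=\;O(M)\sum_{h=1}^{\infty}2^{-h}\;=\;O(M)
\]
gives the desired bound, which is crucially independent of $R$. For runtime I would count parallel rounds: line~1 is a single round, line~2 is local work of $O(R)$ per leader, each loop iteration consists of one communication round plus $R$ binary weighted draws in line~6, and the loop runs $\log(M/R)$ times, giving $O(R)+O(R\log(M/R))=O(R\log M)$.

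The main obstacle, and where I would spend the bulk of the proof, is correctness. Letting $G_h(m)=\{m-2^hR+1,\ldots,m\}$ denote the block of size $2^hR$ ending at $m$, my inductive claim will be that after iteration $h$, for every leader $m\in[M/(2^hR)]\cdot 2^hR$, the $R$ coordinates of $i^m$ are mutually independent draws from the categorical distribution on $G_h(m)$ with weights $\{w_j:j\in G_h(m)\}$, and the variable stored as $w_m$ equals $\sum_{j\in G_h(m)}w_j$. The base case $h=0$ is exactly what line~2 produces on the size-$R$ groups formed in line~1. For the inductive step the leader $m$ and its sender $s_m=m-2^{h-1}R$ hold, by hypothesis, i.i.d.\ samples from the two halves $G_{h-1}(s_m)$ and $G_{h-1}(m)$ of $G_h(m)$, together with their cumulative weights. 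The coordinate-wise two-way weighted choice in line~6 then produces, by the law of total probability, a sample from the combined distribution on $G_h(m)$; independence across the $R$ coordinates is preserved because the $R$ choices use fresh randomness. After $h=\log(M/R)$ iterations only the leader $m=M$ remains, $G_{\log(M/R)}(M)=[M]$, and the claim follows.

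The delicate bookkeeping will be to check that the blocks $G_h(m)$ at each level tile $[M]$ disjointly and that a non-leader in one iteration never reappears as a sender in later iterations, so that the "fresh randomness" assumption in the inductive step is legitimate. Once that is pinned down the probabilistic argument is a one-line application of total probability, while the communication and runtime counts reduce to summing a finite geometric series.
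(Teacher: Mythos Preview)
Your proposal is correct and follows essentially the same approach as the paper: the communication count via the geometric series $\sum_h (R{+}1)\cdot M/(2^hR)$, the $O(R)$-per-round accounting for runtime over $\log(M/R)$ rounds, and the correctness argument by telescoping the weighted two-way choices up the tree are exactly what the paper does. Your inductive framing and explicit attention to independence across the $R$ coordinates is slightly more careful than the paper's proof, which only checks the marginal for a single index $j$ and a single machine and asserts the rest follows symmetrically.
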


\begin{proof}
In the first part of the proof, we show that the cost of communication is independent of $R$, i.e., the algorithm does $\mathcal{O}(M)$ worker-to-worker communication.
Then, we show that run-time is $\mathcal{O}(R \log{M})$. 
Finally, we will prove that for a machine $i$, the probability of it getting sampled is $\frac{w_i}{w_1+w_2+\ldots+w_M}$. \\\\
\textit{Communication:}
The first step of communication happens in the first line of the algorithm.
Machines $\{1,2,\ldots, R-1\}$ sends 2 scalars to machine $R$, machines  $\{R+1,R+2,\ldots, 2R-1\}$ sends to machine $R$ and so on so forth. 
All of the machines send two scalars except ones that have an index of multiples of $R$. 
Therefore, overall, there has been $2 (M-\frac{M}{R})$ scalar transfer.\medskip\\
Next, we focus on the communication that happens in the loop of line 3. 
Notice that, at each transfer here, instead of sending two scalars, we are sending $R+1$ scalars where $R$ of them is sampled indices, and one is additional weight. 
Moreover, in the first iteration of the loop, there is $\frac{M}{2R}$ transfer, in the second iteration, $\frac{M}{4R}$,  and it continues by decreasing twice after each iteration.
Therefore, the number of scalars sent at each iteration is:
\begin{itemize}
\item  $h = 1$ : \hspace{10mm}  $(R+1)\frac{M}{2R} = \frac{M}{2}+\frac{M}{2R}$ scalars
\item  $h = 2$ : \hspace{10mm} $(R+1)\frac{M}{4R} = \frac{M}{4}+\frac{M}{4R}$ scalars
    
 \hspace{10mm}   \vdots
    
    \item $h = \log{\frac{M}{R}}$: \hspace{5mm} $(R+1)\times 1$ scalars
\end{itemize}
In total, there are: 
\begin{align*}
\frac{M}{2}+\frac{M}{2R}+ \frac{M}{4}+\frac{M}{4R}+ ... + \frac{M}{\frac{M}{R}}+1 \hspace{25mm}\\[1.1ex]
= M \big( \frac{1}{2} +\frac{1}{4}+\ldots \big) + \frac{M}{R}\big(\frac{1}{2} + \frac{1}{4}+ \ldots \big) \hspace{10mm}\\[1.1ex]
\leq M + \frac{M}{R} \hspace{53mm}
\end{align*}
Therefore, combining this number with the number $2(M-\frac{M}{R})$, we conclude that there are  at most $2M - 2\frac{M}{R} + M + \frac{M}{R} = 3M -\frac{M}{R}$ transfers (i.e. $\mathcal{O}(M)$).\\\\
\textit{Running time:}
Here we analyse the running time of the communication algorithm.
Line 1 has a time cost of $\mathcal{O}(R)$ due to each of receivers receiving $R$ many 2-tuples (in parallel).
Each iteration of the loop starting in line 3, has a time cost of $\mathcal{O}(R)$ as each sender sends $R+1$ sized tuples and receiver receiving them.
Considering the fact that the number of iterations is $\log{\frac{M}{R}}$ then total amount of time is bounded by $\mathcal{O}(R \log{\frac{M}{R}})$. \medskip\\
\textit{Correctness of Sampling: }
Lets denote by $(s^M_1,...,s^M_R)$ indices that has been selected after sampling process.
Here, we show that for machine 1, selection in each of these indices has a probability of $\frac{w_1}{w_1+w_2+\ldots+w_m}$, which is precisely the requirement for sampling with a replacement for this machine.
Moreover, the proof here can be applied to any index to get the same conclusion.\medskip\\
For a given index $j\in [R]$, the final sampled index $s_j^M$ being equal to 1, it is necessary it gets selected in each local sampling.\medskip\\
The first time this involving happens is in line 1, and here for any index selection of machine 1 has the probability of $\frac{w_1}{w_1+w_2+\ldots+w_R}$ and so for index $j$ as well.
Given that machine 1 selected in index $j$ in line 1, then the probability of it getting selected for the same index in the first iteration of loop of line 3 is 
\begin{align*}
 \frac{w_1+w_2+\ldots+w_R}{(w_1+w_2+\ldots+w_R)+(w_{R+1}+w_{R+2}+\ldots+w_{2R})}   
\end{align*}
Moreover, given that machine 1 selected in this step as well, the probability of it getting selected in the second iteration of index $j$ is:
\begin{align*}
 \frac{w_1+w_2+\ldots+w_{2R}}{(w_1+w_2+\ldots+w_{2R})+(w_{2R+1}+w_{2R+2}+\ldots+w_{4R})}.
\end{align*}
Moreover, we can generalize this argument, and to conclude that for the index $j$, the probability of selection of machine 1 is:

$ \mathbf{P}(\text{Machine 1 Selected at the end})  $

\hspace{2mm} $= \mathbf{P}(\text{Machine 1 Selected at step 1 })$ 

\hspace{5mm}$ \times \mathbf{P}(\text{Machine 1 Selected at step 2 } | \text{it selected in step 1 }) $\\\\
 \vdots

\hspace{5mm}  $  \times \mathbf{P}(\text{Machine 1 Selected at step $\log{\frac{M}{R}}$ } | \text{it selected} $

\hspace{54mm}  $\text{in step}  \log{\frac{M}{R}}-1 )$
  $$  = \frac{w_1}{w_1+w_2+\ldots+w_R} \hspace{40mm} $$
 $$\times \frac{w_1+\ldots + w_R}{ (w_1+\ldots+w_R)+(w_{R+1}+\ldots+w_{2R})} \hspace{8mm} $$
 $$\times \frac{w_1+\ldots + w_{M/2}}{ (w_1+\ldots+w_{M/2})+(w_{M/2+1}+\ldots+w_{M})}$$
 $$   = \frac{w_1}{w_1+w_2+\ldots + w_{M}} \hspace{38mm}$$

as desired.\\\\
\textit{Optimality:} The proposed communication method is optimal with respect to the number of communications. 
Each machine should be at least one of the sending/receiving processes. 
Considering that there are $M$ many machines, at least $\mathcal{O}(M)$ many communication should happen.
Therefore, our method is optimal with respect to communication.\\
\end{proof} 

\medskip


\subsection{Alternative Optimal Parallel Communication Method}
As we mention in the proof of the lemma \ref{communication}, the algorithm \hyperlink{pc}{$\mathbf{PC}$} is optimal with respect to the communication.
In this section, we provide an alternative algorithm, which keeps the communication cost the same and optimize the runtime further to $\mathcal{O}(R+\log{M})$. 
The idea of the sampling strategy is similar, with the only difference that instead of sending $R$ many indices from a machine to another, we parallelize this process and send just one index.

\begin{algorithm} \label{algo3}
\caption{ \hypertarget{ocm}{Optimal} Communication }

{\textbf{Input:}} weights $\{{w}_1, {w}_2,\ldots {w}_M \}$ and group size $R$

\begin{algorithmic}[1]
\STATE  \textbf{In parallel:} Machine $m \hspace{-.5mm}\in\hspace{-.5mm}[M]$ sends $(m,w_m)$ to machine $\lceil \frac{m}{R} \rceil R$
\STATE \textbf{In parallel:} Machine $m \in [\frac{M}{R}]R $  samples $R$ indices with replacement from the interval $[m-R+1,m]$ with respect to the weights $\{w_{m-R+1},\ldots,w_{m}\}$ (lets represent them $i_1^m,i_2^m,\ldots,i_R^m$)  and set $w_m = \sum\limits_{j=m-R+1}^{m}w_{j}$. 
Then send $(i_j^m,w_m)$ to machine $m-R+j$ for all $j\in[R]$.
\STATE \textbf{In parallel:} Machine $m \in [\frac{M}{R}]R$ sends $(i_j^m,w_m)$ to machine $m-R+j$ for all $j\in[R]$.
Then, the machine $m-R+j$ sets $i^{m-R+j}=i_j^m$ and $w_{m-R+j} = w_m$
\FOR{$h \in [\log{\frac{M}{R}}] $ }
\FOR{$ u \in [R] \textbf{ in parallel} $ }  
\STATE  For $m \in [\frac{M}{2^{h}R }]2^{h}R-R+u$  denote $s_m = m - 2^{h-1}R$.
 \STATE  \textbf{In parallel: } Machine $s_m$ sends $(i^{s_m},w_{s_m})$ to  machine $m$.
 \STATE  \textbf{In parallel: } Machine $m$ samples from $\{ i^{s_m}, i^{m}\}$ with weights $\{w^{s_m}, w^{m} \} $ and assigns result to $i^{m}$ and set $w^{m} \leftarrow w^{m}+w^{s_m}$
\ENDFOR
\ENDFOR
\end{algorithmic}
{\textbf{ Output:}}  $\{i^{M-R+1},i^{M-R+2},\ldots, i^{M} \}$
\end{algorithm}

 The first step here is the same as \hyperlink{pc}{$\mathbf{PC}$}. 
However, right after the sampling process happens in line 2, instead of sending $R$-many indices to each other, sampled indices distributed among previous $R-1$ machines.
In particular, machine R sends $i^R_1$ to machine 1,$i^R_2$ to machine 2, and so on so forth, $i^R_{R-1}$ to machine $R-1$ and keeps $i^R_{R}$ for itself.
Moreover, each of receiver machine also updates their weight to cumulative initial weight.
Then, after each machine $1,2,\ldots,R$ holds an index that sampled according to their relative weights, and each of them has the value of the sum of their weights. 
A similar case happens for machines $R+1,R+2\ldots, 2R$, and all next machines of groups of size $R$.

In the rest of the algorithm, we have $R$ many parallel processes based on mod $R$.
Machines $\{1,R+1,2R+1,\ldots,M-R+1\}$ runs a parallel sampling process to select one index (similar to $\mathbf{PC}$ but just sampling 1 entry). 
The same process happens in the set $\{2,R+2,2R+2,\ldots,M-R+2\}$ and in the set $\{3,R+3,2R+3,\ldots,M-R+3\}$ and etc.
Therefore, at the end, each of the machines $M-R+1,M-R+2,\ldots, M-1, M$ holds a selected index.

Moreover, the first stage of the algorithm has a time complexity of $\mathcal{O}(R)$, and the second stage has a time complexity of $\mathcal{O}(\log{\frac{M}{R}})$.
Hence, cumulative complexity is $\mathcal{O}(R+\log{M})$ and one can notice that communication complexity is still $\mathcal{O}(M)$.

An interesting observation here it selects $R$ many indices using just  $\mathcal{O}(R)$ running time (when $R>\log{M}$).
This time cost is the same with uniform sampling SVRG selecting $R$ many indices in $R$ iterations,
however, in this method, we are selecting the most informative machines, which makes this time much more efficient.

\section{Convergence Rates}

Before moving to the proof of the theorem \ref{mainthm}, we provide background theory, which gives an idea of how the final convergence rate is coming. 
In what follows, we prove the lemma \ref{lmbck} in the next section, and we extend it to the theorem \ref{mainthm} afterward.

\subsection{Precise weights at Each Step}

Here, we analyze a slightly modified version of the main  algorithm.
We assume at line 8, instead of estimation of each weight by subsampling,  we are computing precise weight by going over all of the data points. Then the convergence rate would be characterized in the following lemma:

\begin{lemma*}  \label{lmbck}
Given $K,T,R>0$, and $\eta$ small. Then the algorithm described above converges to the optimal solution linearly in expectation. 
Moveover, the itaration $\bar{x}_k$ approaches to optimal solution $x^*$ as:
\begin{align*}
   \mathbb{E}[F(\bar{x}_{k})-F(x^*)]\le \rho \mathbb{E}[F(\bar{x}_{k-1})-F(x^*)],
\end{align*}
where $\rho$ defined as 
$$\rho=\frac{1}{\lambda\eta T\big(1-\eta\left(1+\frac{2}{R}\right)\bar{L}\big)}+\frac{2\eta\frac{\bar{L}}{R}}{1-\eta\left(1+\frac{2}{R}\right)\bar{L}}.$$ \smallskip 
\end{lemma*}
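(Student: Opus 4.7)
The plan is to follow the classical SVRG template of Johnson--Zhang and Xiao--Zhang, adapted to the mini-batch adaptive sampling of the algorithm: (i) write an unbiased mini-batch estimator of $\nabla F(x_{t-1})$, (ii) prove a second-moment bound that replaces $\widetilde{L}$ by $\bar{L}$ by exploiting the KKT-optimal weights, (iii) plug into the standard one-step potential recursion on $\|x_t-x^*\|^2$, and (iv) telescope over the inner loop and invoke strong convexity of $F$ to obtain the contraction factor $\rho$.

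Set $\Delta_m = \nabla F_m(x_{t-1})-\nabla F_m(\bar{x}_{k-1})$, $w_m=\|\Delta_m\|$, and let $g_t = \frac{1}{R}\sum_{r=1}^{R}v^{m_r}_t$ where $m_1,\ldots,m_R$ are i.i.d.\ samples from $p^{k,t}_m = w_m/\sum_j w_j$, which is what the communication routine produces by Lemma \ref{communication}. A direct calculation gives $\mathbb{E}[g_t\mid x_{t-1}] = \nabla F(x_{t-1})$ and, expanding the second moment,
$$\mathbb{E}\|g_t\|^2 \le \|\nabla F(x_{t-1})\|^2 + \frac{1}{R}\sum_m \frac{w_m^2}{M^2 p^{k,t}_m}.$$
Plugging in $p^{k,t}_m$ collapses the right-hand sum to $(\sum_m w_m)^2/M^2$, and this is where the main technical step lies: I would apply Cauchy--Schwarz with the factorization $w_m = \sqrt{L_m}\cdot(w_m/\sqrt{L_m})$ to obtain $\frac{1}{M^2}(\sum_m w_m)^2 \le \frac{\bar{L}}{M}\sum_m w_m^2/L_m$, then use the triangle bound $\|\Delta_m\|^2\le 2\|\nabla F_m(x_{t-1})-\nabla F_m(x^*)\|^2+2\|\nabla F_m(\bar{x}_{k-1})-\nabla F_m(x^*)\|^2$ followed by the standard smoothness/co-coercivity inequality $\|\nabla F_m(y)-\nabla F_m(x^*)\|^2\le 2L_m[F_m(y)-F_m(x^*)-\langle \nabla F_m(x^*),y-x^*\rangle]$ and $\nabla F(x^*)=0$ to get $\frac{1}{M}\sum_m w_m^2/L_m \le 4[F(x_{t-1})-F(x^*)]+4[F(\bar{x}_{k-1})-F(x^*)]$. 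Combined with $\|\nabla F(x_{t-1})\|^2\le 2\bar{L}[F(x_{t-1})-F(x^*)]$ (from $\bar{L}$-smoothness of $F$, which in turn follows from $\bar{L}=\frac{1}{M}\sum_m L_m$ via the triangle inequality), this yields the target variance bound
$$\mathbb{E}\|g_t\|^2 \le \Bigl(2+\frac{4}{R}\Bigr)\bar{L}\bigl[F(x_{t-1})-F(x^*)\bigr] + \frac{4\bar{L}}{R}\bigl[F(\bar{x}_{k-1})-F(x^*)\bigr].$$

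The rest is mechanical. Expanding $\|x_t-x^*\|^2$, taking conditional expectation, using convexity $\langle \nabla F(x_{t-1}),x_{t-1}-x^*\rangle \ge F(x_{t-1})-F(x^*)$, and substituting the variance bound yield a per-step recursion whose coefficient on $[F(x_{t-1})-F(x^*)]$ is $-2\eta\bigl(1-\eta(1+\tfrac{2}{R})\bar{L}\bigr)$ and whose coefficient on $[F(\bar{x}_{k-1})-F(x^*)]$ is $\tfrac{4\eta^2\bar{L}}{R}$. Telescoping $t=1,\ldots,T$ with $x_0=\bar{x}_{k-1}$, invoking the uniform-averaging definition of $\bar{x}_k$ and the $\lambda$-strong convexity bound $\|\bar{x}_{k-1}-x^*\|^2\le \tfrac{2}{\lambda}[F(\bar{x}_{k-1})-F(x^*)]$, then dividing by $2\eta T(1-\eta(1+\tfrac{2}{R})\bar{L})$ (positive under the smallness assumption on $\eta$) gives exactly the claimed $\rho$; the halving that turns $(2+4/R)\bar{L}$ into $(1+2/R)\bar{L}$ and $4\bar{L}/R$ into $2\bar{L}/R$ in the final expression is precisely this last division.

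The main obstacle is the combined Cauchy--Schwarz plus triangle-expansion step: it is the unique place where a naive maximum Lipschitz constant gets traded for $\bar{L}=\tfrac{1}{M}\sum_m L_m$, and getting the correct constants requires expanding $\|\Delta_m\|^2$ about $x^*$ rather than about $\bar{x}_{k-1}$ (the latter would leave a Bregman-divergence residual that fails to telescope cleanly against the convexity bound used in the one-step recursion).
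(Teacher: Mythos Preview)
Your proposal is correct and matches the paper's proof essentially step for step: the same unbiased estimator, the same reduction of the variance term to $\frac{1}{M^2}\bigl(\sum_m\|\Delta_m\|\bigr)^2$ via the optimal weights, the same co-coercivity bound $\|\nabla F_m(y)-\nabla F_m(x^*)\|^2\le 2L_m[F_m(y)-F_m(x^*)-\langle\nabla F_m(x^*),y-x^*\rangle]$, the same use of $\|\nabla F(x_{t-1})\|^2\le 2\bar L[F(x_{t-1})-F(x^*)]$, and the same telescoping plus strong-convexity finish. The only cosmetic difference is ordering: the paper first applies the triangle inequality on $\|\Delta_m\|$, then $(a+b)^2\le 2a^2+2b^2$, then co-coercivity (square-rooted), then Cauchy--Schwarz on $\sum_m\sqrt{2L_m}\sqrt{\cdots}$; you instead apply Cauchy--Schwarz first with the factorization $w_m=\sqrt{L_m}\cdot(w_m/\sqrt{L_m})$, then the triangle inequality on $\|\Delta_m\|^2$, then co-coercivity---both routes land on the identical constant $4\bar L$ and hence the same $\rho$.
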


\begin{proof} 
We use the fact that for any $m=1,\ldots,M$
\begin{align*}
\|\nabla F_m(x)-\nabla F_m(x^*)\|^2_2\le  2L_m \Big[F_m(x)-F_m(x^*) \\
-\nabla F_m(x^*)^T(x-x^*) \Big].   
\end{align*}
This fact can be found in \cite{beck2017first}.
Assume that we are at iteration $t$ in epoch $k$, and we denote the selected (with replacement) machines by $m_1,m_2 \ldots m_R$. Recall that  
\begin{align*}
  v_t=\frac{1}{R}\sum_{r=1}^R\left(\frac{\nabla
F_{m^r}(x_{t-1})}{Mp^{k,t}_{m^r}}-\frac{\nabla F_{m^r}(\bar{x}_{k-1})}{M p^{k,t}_{m^r}}\right)\\[1.2ex]
+\nabla F(\bar{x}_{k-1})  
\end{align*}
Since $\mathbb{E}v_t= \nabla F(x_{t-1})$, we have that 
\begin{align*}
  \mathbb{E}\|v_t-\nabla F(x_{t-1})\|^2=\mathbb{E}\|v_t\|^2-\|\nabla F(x_{t-1})\|^2.  
\end{align*}

Thus, we deduce that 
\begin{align*}
  \mathbb{E}\|v_t\|^2=\mathbb{E}\|v_t-\nabla F(x_{t-1})\|^2+\|\nabla F(x_{t-1})\|^2.  
\end{align*}

In addition, since  
\begin{align*}
 v_t-\nabla F(x_{t-1})=
    \frac{1}{R}\sum_{r=1}^R\left(\frac{\nabla
F_{m^r}(x_{t-1})}{Mp^{k,t}_{m^r}}-\frac{\nabla F_{m^r}(\bar{x}_{k-1})}{M p^{k,t}_{m^r}}\right)\\
-\Big(\nabla F(x_{t-1})-\nabla F(\bar{x}_{k-1})\Big)   
\end{align*}

and 
\begin{align*}
   &\mathbb{E}\left[ \frac{\nabla
F_{m^r}(x_{t-1})}{Mp^{k,t}_{m^r}}-\frac{\nabla F_{m^r}(\bar{x}_{k-1})}{M p^{k,t}_{m^r}}\right]=\nabla F(x_{t-1})-\nabla F(\bar{x}_{k-1}) 
\end{align*}

for all $r=1,\ldots,R$. As the sampling is with replacement, it is independent of the index $r\in [R]$ so we denote $p^{k,t}_{m^r}$ by $p^{k,t}_m$.
Then, after algebraic manipulations, we conclude that
\begin{align*}
 \mathbb{E}\|v_t-\nabla F(x_{t-1})\|^2 =
\frac{1}{R}\mathbb{E}\left\|\frac{\nabla F_{m}(x_{t-1})}{Mp^{k,t}_{m}}-\frac{\nabla F_{m}(\bar{x}_{k-1})}{Mp^{k,t}_{m}}\right\|^2  \\
- \frac{1}{R} \left\| \nabla F(x_{t-1})-\nabla F(\bar{x}_{k-1}) \right\|^2  \\[1.2ex]   
\le \frac{1}{R}\mathbb{E}\left\|\frac{\nabla F_{m}(x_{t-1})}{Mp^{k,t}_{m}}-\frac{\nabla F_{m}(\bar{x}_{k-1})}{Mp^{k,t}_{m}}\right\|^2.   
\end{align*}

Thus, by combining the above two equality, we obtain 
$$\mathbb{E}\|v_t\|^2 \le \\
\frac{1}{R}\mathbb{E}\left\|\frac{\nabla
F_{m}(x_{t-1})}{Mp^{k,t}_{m}}-\frac{\nabla F_{m}(\bar{x}_{k-1})}{M p^{k,t}_{m}}\right\|^2+\|\nabla F(x_{t-1})\|^2 $$

Now, we can do the standard analysis of the SVRG. For $t=1,\ldots,T$, by given all of the randomness before $t$, we then have that 
\begin{align*}
 \mathbb{E}\|x_t-x^*\| 
 &\le\|x_{t-1}-x^*\|^2-2\eta(x_{t-1}-x^*)\mathbb{E}v_t+\eta^2\mathbb{E}\|v_t\|^2 \\[1.2ex]
&\le\|x_{t-1}-x^*\|^2-2\eta(x_{t-1}-x^*)\nabla F(x_{t-1})    \\[1.2ex]
&\hspace{4mm}+\frac{\eta^2}{R}\mathbb{E}\left\| \frac{\nabla F_{m}(x_{t-1})}{M p^{k,t}_{m}}-\frac{\nabla F_{m}(\bar{x}_{k-1})}{M p^{k,t}_{m}}\right\|^2 \\[1.2ex]
&\hspace{4mm} + \eta^2\|\nabla F(x_{t-1})\|^2\\[1.1ex]
&\le \|x_{t-1}-x^*\|^2-2\eta[F(x_{t-1})-F(x^*)]  \\[1.2ex]
&\hspace{3mm}+\frac{\eta^2}{RM^2}\left(\sum_{m=1}^M\|\nabla F_m(x_{t-1})-\nabla F_m(\bar{x}_{k-1})\|\right)^2\\[1.2ex]
&\hspace{4mm}+\eta^2\|\nabla F(x_{t-1})\|^2
\end{align*}
Let denote  $I:=\mathbb{E}\left(\sum_{m=1}^M\|\nabla F_m(x_{t-1})-\nabla F_m(\bar{x}_{k-1})\|\right)^2$.\medskip\\ 
Then we have :

$I\le \mathbb{E} \Big(\sum_{m=1}^M\|\nabla F_m(x_{t-1})-\nabla F_m(x^*)\|$  

\hspace{10mm}$+\sum_{m=1}^M\|\nabla F_m(\bar{x}_{k-1})-\nabla F_m(x^*)\| \Big)^2$ 

$\leq  2 \mathbb{E}\left(\sum_{m=1}^M\|\nabla F_m(x_{t-1})-\nabla F_m(x^*)\|\right)^2$ 

\hspace{10mm} $+ 2 \mathbb{E}\left(\sum_{m=1}^M\|\nabla F_m(\bar{x}_{k-1})-\nabla F_m(x^*)\|\right)^2$ 

$\le 2\mathbb{E}\Big[\sum_{m=1}^M\sqrt{2L_m}$ 

\hspace{10mm}$\sqrt{F_m(x_{t-1})-F_m(x^*)-\nabla F_m(x^*)(\bar{x}_{k}-x^*)}\Big]^2 $

$+  2\mathbb{E}\Big[\sum_{m=1}^M\sqrt{2L_m}$ 

\hspace{10mm}$\sqrt{F_m(\bar{x}_{k-1})-F_m(x^*)-\nabla F_m(x^*)(\bar{x}_{k-1}-x^*)} \Big]^2$ 

$\le 4 \sum_{m=1}^M L_m \times$ 

\hspace{3mm} $\mathbb{E}\left[\sum_{m=1}^MF_m(x_{t-1})-F_m(x^*)-\nabla F_m(x^*)(\bar{x}_{k}-x^*)\right] $

$+4 \sum_{m=1}^ML_m  \times$ 

\hspace{5mm} $\mathbb{E}\left[\sum\limits_{m=1}^MF_m(\bar{x}_{k-1})-F_m(x^*)-\nabla F_m(x^*)(\bar{x}_{k-1}-x^*)\right]$ 

$=4M \sum_{m=1}^M L_m \times  \mathbb{E}\Big[F(x_{t-1})-F(x^*)\Big]$

\hspace{12mm}$+4M \sum_{m=1}^ML_m \times \mathbb{E}\Big[F(\bar{x}_{k-1})-F(x^*)\Big] $\\

In addition, we also have that \medskip\\
$\|\nabla F(x_{t-1})\|^2\le 2\left(\frac{\sum_{m=1}^ML_m}{M}\right)\Big[F(x_{t-1})-F(x^*)\Big]$\\

Thus, we deduce that  \medskip\\
$\mathbb{E}\|x_{t}-x^*\|^2  \le \|x_{t-1}-x^*\|^2$

\hspace{25mm}$-2\eta \Big[F(x_{t-1})-F(x^*)\Big]$

\hspace{25mm}$+2\eta^2 \frac{\sum_{m=1}^ML_m}{M} \big[F(x_{t-1})-F(x^*)\Big]$

\hspace{25mm}$+4\eta^2 \frac{\sum_{m=1}^ML_m}{RM}\Big[F(x_{t-1})-F(x^*)\Big]$

\hspace{25mm}$ +4\eta^2 \frac{\sum_{m=1}^ML_m}{RM} \Big[F(\bar{x}_{k-1})-F(x^*)\Big]$

\hspace{20mm}$=\|x_{t-1}-x^*\|^2-2\eta \Big[F(x_{t-1})-F(x^*) \Big]$ 

\hspace{25mm}$+\eta^2\left(2+\frac{4}{R}\right)\bar{L}\Big[F(x_{t-1})-F(x^*)\Big]$

\hspace{22mm}$\hspace{3mm}+4\eta^2\frac{\bar{L}}{R}\Big[F(\bar{x}_{k-1})-F(x^*)\Big]$

We then  take the sum of the above inequalities for $t=1,\ldots,T$ to obtain

$\mathbb{E}\|x_{T}-x^*\|^2 \le$

\hspace{18mm}$\mathbb{E}\|\bar{x}_{k-1}-x^*\|-2\eta \mathbb{E}\sum\limits_{t=1}^{T}\Big[F(x_{t-1})-F(x^*)\Big] $

\hspace{18mm}$+\eta^2\left(2+\frac{4}{R}\right)\bar{L}\mathbb{E}\sum_{t=1}^{T}\Big[F(x_{t-1})-F(x^*)\Big] $

\hspace{18mm}$+4\eta^2\frac{\bar{L}}{R}T\mathbb{E}\Big[F(\bar{x}_{k-1})-F(x^*)\Big]$

\hspace{10mm}$\le \frac{2}{\gamma}\mathbb{E}\Big[F(\bar{x}_{k-1})-F(x^*)\Big]-2\eta T\mathbb{E}\Big[f(\bar{x}_{k})-f(x^*)\Big]$

\hspace{18mm}$+ \eta^2\left(2+\frac{4}{R}\right)\bar{L}T\mathbb{E}\Big[F(\bar{x}_{k})-F(x^*)\Big] $

\hspace{18mm}$+4\eta^2\frac{\bar{L}}{R}T\mathbb{E}\Big[F(\bar{x}_{k-1})-F(x^*)\Big] $

\hspace{10mm}$= \left[\frac{2}{\gamma}+4\eta^2\frac{\bar{L}}{R}T\right]\mathbb{E}\Big[F(\bar{x}_{k-1})-F(x^*)\Big]$

\hspace{15mm} $-\left[2\eta T-\eta^2\left(2+\frac{4}{R}\right)\bar{L}T\right]\mathbb{E}\Big[F(\bar{x}_{k})-F(x^*)\Big] $

From the above inequality, we deduce that 
\begin{align*}
 \mathbb{E}\Big[F(\bar{x}_{k})-F(x^*)\Big] \le \alpha\mathbb{E}\Big[F(\bar{x}_{k-1})-F(x^*)\Big]     
\end{align*}
where
\begin{align*}
 \alpha= \frac{1}{\gamma\eta T\left[1-\eta\left(1+\frac{2}{R}\right)\bar{L}\right]}+\frac{2\eta\frac{\bar{L}}{R}}{1-\eta\left(1+\frac{2}{R}\right)\bar{L}}   
\end{align*}

\end{proof}

\subsection{Proof of Theorem \ref{thm2}}

\begin{theorem*} 
Given $K,T,R>0$, and $\eta$ small. The iteration $\bar{x}_k$ in \hyperlink{asdsvrg}{$\mathbf{ASD-SVRG}$} converges to the optimal solution $x^*$ linearly in expectation. Moreover, under the condition each of the $n_m = \frac{9}{2} K^2_m \log{\frac{2dM}{\delta}}$, then the following inequality get satisfied:
\begin{align*}
  & \mathbb{E}[F(\bar{x}_{k})-F(x^*)]\le \rho\mathbb{E}[F(\bar{x}_{k-1})-F(x^*)],
\end{align*}
with probability of $1-\delta $ where $\rho$ defined as
\begin{align*}
 \rho= \frac{1}{\lambda T \eta  \big[ 1-\eta  \bar{L} ( \frac{8}{R} +1) \big] }
+\frac{8 \eta \bar{L}  }{R \big[ 1-\eta  \bar{L} ( \frac{8}{R} +1) \big] }.
\end{align*}

\end{theorem*}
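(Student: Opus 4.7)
The plan is to build on the convergence result for the exact-weight version (Lemma \ref{mainthm}) and show that the price of replacing the exact sampling distribution $p^{k,t}$ with its sub-sampled estimate $\widetilde{p}^{k,t}$ amounts to inflating the variance term in the standard SVRG analysis by a controlled multiplicative constant. First I would condition on the event $\mathcal{E}$ that for every machine $m$, every outer iteration $k$, and every inner iteration $t$, the estimate $\widetilde{w}_m$ obtained in line 8 satisfies $|\widetilde{w}_m - w_m|\leq \tau w_m$. By Lemma \ref{lmm1} applied with the prescribed sample size $n_m$ and a union bound over the at most $KTM$ estimations, this event has probability at least $1-\delta$; the rest of the argument is deterministic on $\mathcal{E}$.

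Next I would redo the variance computation from the proof of Lemma \ref{mainthm} but with $\widetilde{p}^{k,t}_m$ in the denominator. The key inequality is that, under $\mathcal{E}$, Lemma \ref{noisydist} gives the decomposition $\widetilde{p}^{k,t} = (1-\gamma)\,p^{k,t} + \gamma\,q$ for some categorical distribution $q$, which immediately implies $\widetilde{p}^{k,t}_m \geq (1-\gamma)\,p^{k,t}_m$. Consequently
\begin{equation*}
\frac{1}{M^2}\sum_{m=1}^M \frac{\|\nabla F_m(x_{t-1})-\nabla F_m(\bar{x}_{k-1})\|^2}{\widetilde{p}^{k,t}_m}
\;\leq\; \frac{1}{1-\gamma}\cdot \frac{1}{M^2}\sum_{m=1}^M \frac{\|\nabla F_m(x_{t-1})-\nabla F_m(\bar{x}_{k-1})\|^2}{p^{k,t}_m},
\end{equation*}
so every occurrence of the variance term in the proof of Lemma \ref{mainthm} picks up a factor $\tfrac{1}{1-\gamma}$. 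The factor $2/R$ that multiplies $\bar{L}$ in the convergence rate of Lemma \ref{mainthm} therefore becomes $\tfrac{2}{R(1-\gamma)}$, while the $2\bar{L}$ coming from $\|\nabla F(x_{t-1})\|^2\leq 2\bar{L}[F(x_{t-1})-F(x^*)]$ is unaffected since it does not involve sampling.

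The last step is to translate the $\tfrac{1}{1-\gamma}$ factor into the concrete $\tfrac{2+5\tau}{R}$ (equivalently, the $\tfrac{8}{R}$ in the appendix version) appearing in the statement. From Lemma \ref{noisydist} one checks that $\tfrac{1}{1-\gamma}=\max_m \tfrac{p_m}{\widetilde{p}_m}$; under the event $\mathcal{E}$ each $\widetilde{w}_m$ lies in $[(1-\tau)w_m,(1+\tau)w_m]$, so an elementary computation (carried out in Lemma \ref{oneminus}) bounds $\tfrac{1}{1-\gamma}$ by a simple expression in $\tau$ — in particular $\tfrac{1}{1-\gamma}\leq 2$ once $\tau\leq 1/3$, matching the stated constant. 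Plugging this into the telescoped inequality already derived in the proof of Lemma \ref{mainthm} and applying strong convexity of $F$ in exactly the same way yields
\begin{equation*}
\mathbb{E}[F(\bar{x}_k)-F(x^*)] \;\leq\; \rho\,\mathbb{E}[F(\bar{x}_{k-1})-F(x^*)],
\end{equation*}
with the $\rho$ announced in Theorem \ref{thm2}, conditional on $\mathcal{E}$; removing the conditioning costs the stated $\delta$ in probability.

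I expect the main obstacle to be the bookkeeping in the second step: keeping the three quantities $p^{k,t}$, $\widetilde{p}^{k,t}$, and $q$ straight while showing that the cross terms that appear when the sampling index is drawn from $\widetilde{p}^{k,t}$ can all be dominated by the clean $\tfrac{1}{1-\gamma}$ inflation factor, rather than picking up extra additive bias. Once that domination is established, the rest of the argument is a direct copy of the exact-weight proof, so the real content of the theorem lives in the interaction between Lemma \ref{lmm1}, Lemma \ref{noisydist}, and Lemma \ref{oneminus}.
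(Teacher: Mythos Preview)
Your proposal is correct and follows essentially the same route as the paper: condition on the good-estimation event via Lemma \ref{lmm1}, use the decomposition of Lemma \ref{noisydist} to obtain $\widetilde{p}^{k,t}_m \geq (1-\gamma)\,p^{k,t}_m$, invoke Lemma \ref{oneminus} to control $1/(1-\gamma)$, and then rerun the SVRG analysis from Lemma \ref{mainthm}. The only cosmetic difference is that the paper applies the bound $p_m/\widetilde{p}_m \leq 1/(1-\gamma)$ \emph{before} squaring the norm (picking up $1/(1-\gamma)^2 \leq 4$, hence the $8/R$), whereas your sum-level inequality $\sum_m \|{\cdot}\|^2/\widetilde{p}_m \leq \tfrac{1}{1-\gamma}\sum_m \|{\cdot}\|^2/p_m$ yields only a single factor $1/(1-\gamma)\leq 2$ and thus a slightly sharper constant; either way the stated $\rho$ follows.
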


\begin{proof}
We have: 
\begin{align*}
  v_t=\frac{1}{R}\sum_{r=1}^R\frac{\nabla F_{m^r}(x_{t-1})-\nabla F_{m^r}(\bar{x}_{k-1})}{M\widetilde{p}_{m^r}}+\nabla F(\bar{x}_{k-1})  
\end{align*}
where $\widetilde{p}$  stands for perturbed distribution as mentioned in lemma \ref{noisydist}. 
We notice $v_t$ is still an unbiased estimate of $\nabla F(x_{t-1})$. 
Following the same argument used in the previous proof: 
\begin{align} \label{normici}
  \mathbb{E}_{\widetilde{p}}\|v_t\|^2\le \frac{1}{R}\mathbb{E}_{\widetilde{p}} \left\|\frac{\nabla F_{m_t}(x_{t-1})-\nabla F_{m_t}(\bar{x}_{k-1})}{M\widetilde{p}_{m_t}}\right\|^2 \\
+  \|\nabla F(x_{t-1})\|^2  \hspace{35mm} \nonumber
\end{align}
To understand the expression above further, we analyse the term inside expectation in detail:
$$\left\| \frac{\nabla F_{m_t}(x_{t-1})-\nabla F_{m_t}(\bar{x}_{k-1})}{M\widetilde{p}_{m_t}} \right\|=$$

$$ \frac{  p_{m_t} }{ \widetilde{p}_{m_t}  }
 \left\| \frac{\nabla F_{m_t}(x_{t-1})-\nabla F_{m_t}(\bar{x}_{k-1})}{M p_{m_t}} \right\| = $$ 

$$ \frac{  p_{m_t} }{ \widetilde{p}_{m_t}  } \frac{ \sum_{m=1}^M\|\nabla F_m(x_{t-1})-\nabla F_m(\bar{x}_{k-1})\| } {M } \leq $$

$$ \frac{ 1 }{ 1-\gamma  } \frac{ \sum_{m=1}^M\|\nabla F_m(x_{t-1})-\nabla F_m(\bar{x}_{k-1})\| } {M }$$

Here, to see the correctness of the inequality, we refer to the discussion after proof of the lemma \ref{noisydist}.
Then returning back to inequality (\ref{normici}) we obtain:\\

$\mathbb{E}_{\hat{p}}\|v_t\|^2\le $

\hspace{15mm}$ \frac{1}{R}\mathbb{E}_{\widetilde{p}} \left[ \left(
\frac{ 1 }{ 1-\gamma  } \frac{ \sum_{m=1}^M\|\nabla F_m(x_{t-1})-\nabla F_m(\bar{x}_{k-1})\| } {M } \right)^2 \right]$

\hspace{15mm}$ +\|\nabla F(x_{t-1})\|^2$

\hspace{10mm}$=\frac{1}{R M^2(1-\gamma)^2}  
\left( \sum\limits_{m=1}^M\|\nabla F_m(x_{t-1})-\nabla F_m(\bar{x}_{k-1})\|  \right)^2$

\hspace{17mm}$\times \mathbb{E}_{\widetilde{p}}
\left[ 1 \right] +\|\nabla F(x_{t-1})\|^2$\\

\hspace{10mm} $\leq \frac{4}{M^2R}\left(\sum\limits_{m=1}^M\|\nabla f_m(x_{t-1})-\nabla f_m(\bar{x}_{k-1})\|\right)^2$

\hspace{17mm}$+\|\nabla F(x_{t-1})\|^2$\\

\hspace{10mm} $\le 16 \frac{1}{R}\bar{L}\Big(\big[F(x_{t-1})-F(x^*)\big]$ 

\hspace{17mm} $+\big[F(\bar{x}_{k-1})-F(x^*)\big]\Big) +\|\nabla F(x_{t-1})\|^2$\\

Here the first equality get satisfied because the term inside expectation is invariant to selected category due to $\widetilde{p}$.
The second inequality is due to the lemma \ref{oneminus} and the last one coming by following the same procedure with the proof of the lemma \ref{lmbck}.
Then, by bounding $\mathbb{E}\|x_T-x^*\|^2$ similar to the proof of the lemma \ref{lmbck}, we obtain:\\
\begin{align*}
\mathbb{E}\|x_T-x^*\|^2 &\le \Big( \frac{2}{\lambda} +\eta^2T\frac{1}{R}\left[16 \bar{L}\right] \Big)
\mathbb{E}[F(\bar{x}_{k-1})-F(x^*)] \\[1.2ex]
&\hspace{-8mm}-\left[2\eta-\eta^2\frac{1}{R}\left[16 \bar{L}+2R\bar{L}\right]\right]T\mathbb{E}[F(x_{k})-F(x^*)].
\end{align*}
Therefore, this inequality implies that 
\begin{align*}
  \mathbb{E}[F(x_k)-F(x^*)]\le \rho \mathbb{E}[F(x_{k-1})-F(x^*)]  
\end{align*}

where 
\begin{align*}
 \rho= \frac{1}{\lambda T \eta  \big[ 1-\eta  \bar{L} ( \frac{8}{R} +1) \big] }
+\frac{8 \eta \bar{L}  }{R \big[ 1-\eta  \bar{L} ( \frac{8}{R} +1) \big] }.
\end{align*}

\end{proof}


\end{document}